%% file: iclr2026_conference_camera_arxiv.tex
\documentclass{article} % For LaTeX2e
\usepackage[dvipsnames]{xcolor}

\usepackage{iclr2026_conference,times}

\input{math_commands.tex}

\usepackage[pagebackref=true,breaklinks=true]{hyperref}
\renewcommand*\backref[1]{\ifx#1\relax\else(Cited on p. #1)\fi}

\usepackage{url}
\usepackage{xspace}
\usepackage{subcaption}
\usepackage{macros}
\usepackage{wrapfig}
\usepackage{enumitem}
\usepackage{graphicx}
\usepackage{adjustbox}

\title{
Entropy-preserving reinforcement learning
}

\author{Aleksei Petrenko\thanks{co-first authorship.}
\\
Apple \And Ben Lipkin$^*$\thanks{work performed during an internship at Apple.}
\\
MIT \And Kevin Chen
\\
Apple \And Erik Wijmans
\\
Apple \And Marco Cusumano-Towner
\\
Apple \And Raja Giryes
\\
Apple \And Philipp Krähenbühl
\\
Apple
}

\newcommand{\xhdr}[1]{\textbf{#1}\xspace}
\DeclareMathOperator{\bfcast}{bf16}

\iclrfinalcopy % Uncomment for camera-ready version, but NOT for submission.
\begin{document}

\maketitle

\begin{abstract}
Policy gradient algorithms have driven many recent advancements in language model reasoning.
An appealing property is their ability to learn from exploration on their own trajectories, a process crucial for fostering diverse and creative solutions.
As we show in this paper, many policy gradient algorithms naturally reduce the entropy---and thus the diversity of explored trajectories---as part of training, yielding a policy increasingly limited in its ability to explore.
In this paper, we argue that entropy should be actively monitored and controlled throughout training.
We formally analyze the contributions of leading policy gradient objectives on entropy dynamics, identify empirical factors (such as numerical precision) that significantly impact entropy behavior, and propose explicit mechanisms for entropy control.
These include REPO, a family of algorithms that modify the advantage function to regulate entropy, and ADAPO, an adaptive asymmetric clipping approach.
Models trained with our entropy-preserving methods maintain diversity throughout training, yielding final policies that are more performant and retain their trainability for sequential learning in new environments.
\end{abstract}

\section{Introduction}
Online policy gradient reinforcement learning (RL) has become the standard for boosting the reasoning abilities of language models \citep{jaech2024openai,comanici2025gemini,guo2025deepseek}. 
This approach involves sampling trajectories from the current policy within a given environment and reward function, then using these to estimate a gradient that maximizes expected reward. 
Effective RL optimization requires balancing exploration and exploitation \citep{thrun1992efficient,sutton1998reinforcement}, where a robust learner should generate diverse trajectories to cover the spectrum of potential solutions. 
Maximum entropy reinforcement learning offers a framework for achieving this balance \citep{ziebart2008maximum,haarnoja2017reinforcement,haarnoja2018soft,eysenbach2022maximum}.
While trivially the optimal solution to a finite Markov decision process (MDP) is a deterministic stationary policy, optimization over the intermediate landscape requires a balance of exploration and exploitation.

A common issue observed in online algorithms like GRPO \citep{shao2024deepseekmath} is entropy collapse. 
This phenomenon occurs when training excessively narrows the distribution around already high-probability solutions from the base model, neglecting other correct but less probable options. 
This often leads to
premature convergence to a local optimum, enhancing \passone relative to base model at the expense of \passk \citep{shao2024deepseekmath,dang2025assessing,yue2025does}. 
This challenge has spurred innovations in policy gradient algorithm design, e.g. directly optimizing for \passk performance \citep{chen2025pass}. 
Concurrently, research has highlighted GRPO's training instability and the complex interplay between off-policy drift, importance weight clipping, and entropy, inspiring modifications such as DAPO \citep{yu2025dapo} and GSPO \citep{zheng2025group}.

In this work, we argue that \textbf{entropy should be actively monitored and controlled throughout RL training}. We analyze entropy preservation as a unifying lens for understanding the successes of recent algorithms and propose explicit mechanisms for entropy control.
An important observation from our work is that, while a correlation exists between final entropy and performance, a more informative measure is the entropy trajectory throughout the optimization process.
As the saying goes, ``it's not the destination, it's the journey.''
Figure~\ref{fig:entropy_teaser} tracks this effect.
A trajectory characterized by lower entropy throughout training yields lower performance.
Conversely, if entropy trajectories are similar for most of the optimization but differ only in the final steps, performance is largely unaffected.\looseness=-1

Our contributions span theory and algorithmic development. We analyze how policy gradient objectives modulate entropy dynamics,
proving that PPO's clipping bounds entropy change and that DAPO's and GSPO's clipping implicitly preserve entropy. We identify critical implementation      
  factors affecting entropy dynamics, including numerical precision (BF16 vs FP16) and framework behaviors (FSDP2 output casting), explaining         
  previously observed training instabilities. We propose explicit entropy control mechanisms---REPO, which modifies the advantage function, and ADAPO, an      
  adaptive asymmetric clipping approach---both using adaptive controllers to maintain target entropy levels. Our numerical fixes alone yield state-of-the-art  
  on AppWorld (79\% Test Normal, 71\% Test Challenge), while entropy-preserving REPO and ADAPO achieve the strongest off-policy performance, closing the gap to
  on-policy training and retaining trainability for sequential learning.

\begin{figure*}[t]
    \centering
      % Top row: Qwen 8B Trajectories
      \begin{subfigure}[b]{0.32\textwidth}
          \centering
          \includegraphics[width=\textwidth]{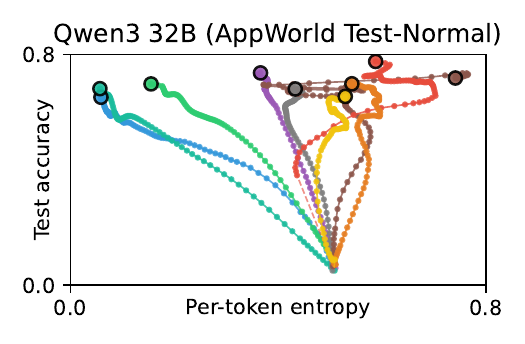}
      \end{subfigure}
      \hfill
      \begin{subfigure}[b]{0.32\textwidth}
          \centering
          \includegraphics[width=\textwidth]{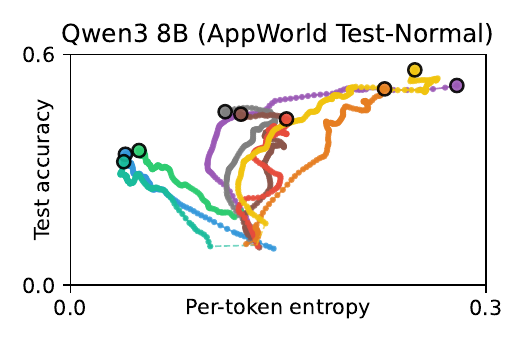}
      \end{subfigure}
     \hfill
          \begin{subfigure}[b]{0.32\textwidth}
          \centering
          \includegraphics[width=\textwidth]{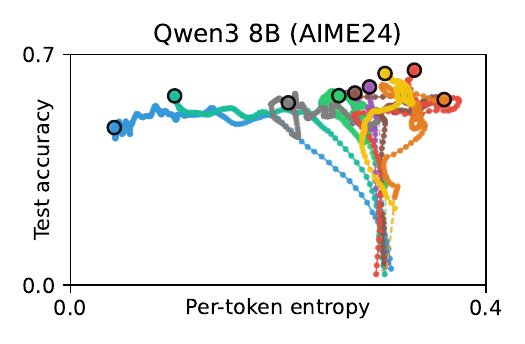}
      \end{subfigure}
    \vspace{-2.5mm}
      \begin{subfigure}[b]{0.32\textwidth}
          \centering
          \includegraphics[width=\textwidth]{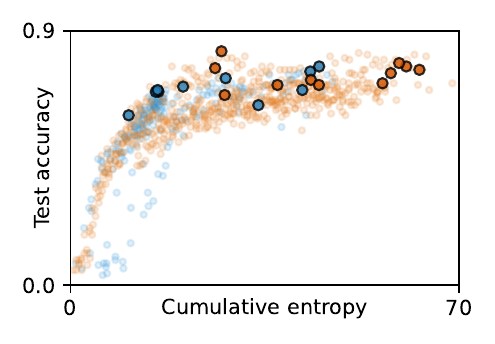}
      \end{subfigure}
      \hfill
      \begin{subfigure}[b]{0.32\textwidth}
          \centering
          \includegraphics[width=\textwidth]{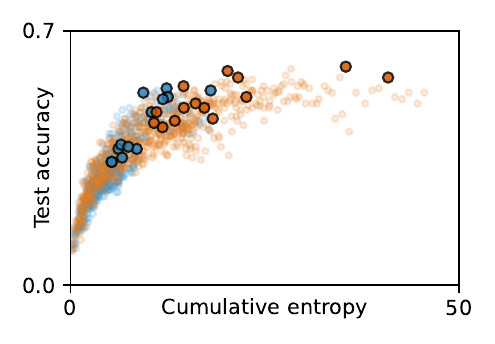}
      \end{subfigure}
      \hfill
      \begin{subfigure}[b]{0.32\textwidth}
          \centering
          \includegraphics[width=\textwidth]{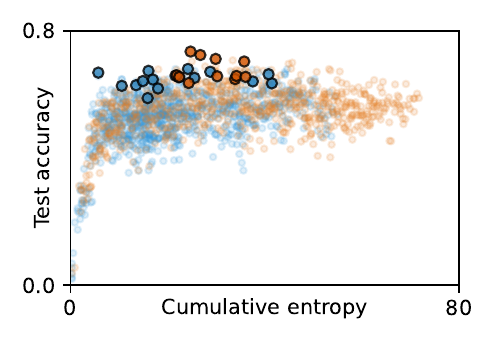}
      \end{subfigure}
  \caption{  
  Top: Evolution of the average per-token entropy and test accuracy during training for several baselines (GRPO, LOOP, DAPO, GSPO) and their entropy regularized versions (REPO).
  Each curve shows the average trajectory over several training runs with different seeds.
  Bottom: Cumulative entropy experienced during training up to a given checkpoint is positively correlated with the test accuracy. Each point is a checkpoint of a single training run (best-performing checkpoint per run highlighted).
  Algorithms that collapse the entropy early (see e.g. \ttt{Qwen-3-8B} on AppWorld; middle column) perform significantly worse than algorithms that maintain a steady entropy during training.  }
  \vspace{-0.25cm}
  \label{fig:entropy_teaser}
\end{figure*}

\section{Preliminaries}
\label{sec:prelim}

\xhdr{Language modeling.}
Let $\tok \in \alphabet$ denote the tokens in a vocabulary and $\str \in \strings$ the strings expressible via concatenation of those tokens.
A \defn{language model} (LM) $\policy$ parameterized by $\params$ defines a probability distribution over strings that factors autoregressively such that ${\policy(\str) = \policy(\eos\mid\str) \prod_{i=1}^{\card{\str}} \policy(\tok_i\mid\str_{<i})}$, where $\eos$ denotes an end of sequence (EOS) marker.
For notational convenience we will use $\policy$ to express probabilities on both tokens and strings.

\xhdr{Language modeling as a Markov decision process.}
Let the policy $\policy$ sample \defn{actions} $\action \in \actionspace = \alphabet \cup \set{\eos}$ (any token or EOS) given a \defn{state} $\state \in \strings$ (a string context).
Let state transitions append generated actions to the state.\footnote{State transitions deterministically append the generated action to the context, terminating generation at EOS or upon some other environment condition.
In some domains, e.g., those involving tool calls, state transitions may also append additional tokens to the state that were generated by some unobservable process such as executing a code interpreter.}
Let $\traj$ denote a \defn{trajectory}, a sequence of states and actions generated by the policy and environment.
Let $\traj\sim\policy$ denote the trajectory distribution.
We consider tasks with \defn{terminal rewards} $\reward(\prompt,\traj)$.
Given some task context $\prompt$ sampled from some dataset $\data$, the MDP objective is to maximize $\obj_\mathrm{\mathrm{MDP}}\defeq\E_{\prompt\sim\data, \traj\sim\policy(\cdot\mid\prompt)}[\reward(\prompt,\traj)]$.

\xhdr{Policy gradient reinforcement learning}
directly computes a gradient through the REINFORCE algorithm~\citep{williams1992simple}, which is amenable to Monte Carlo estimation:
\begin{align*}
\grad_\params \obj_\mathrm{\mathrm{MDP}} = \E_{\prompt\sim\data, \traj\sim\policy(\cdot\mid\prompt)}\left[\adv(\prompt,\traj)\cdot \grad_\params \log\policy(\traj\mid\prompt)\right],
\end{align*}
where $\adv(\prompt,\traj) = \reward(\prompt,\traj) - \baseline$ is an advantage function shifting the return $\reward(\prompt, \traj)$ by a baseline $\baseline$.

\xhdr{REINFORCE leave-one-out (RLOO)}~\citep{kool2019buy,ahmadian2024back,kazemnejad2024vineppo,chen2025reinforcement} is one of the most popular estimates of advantage for language modeling.
It generates $K$ independent samples on-policy $\traj_1,\dots,\traj_K\sim\policy(\cdot\mid\prompt)$ for each task $\prompt$.
The reward for each trajectory may then be baselined against the remaining $K-1$ independent samples, yielding an unbiased, low variance advantage estimator:
\begin{align*}
\widehat{\adv}_{\mathrm{RLOO}}(\prompt,\traj_i)\defeq\reward(\prompt,\traj_i)-\frac{1}{K-1}\sum_{j=1}^{K}\reward(\prompt,\traj_j)\indicator{[i \neq j]} = \frac{K}{K-1}\left(\reward(\prompt,\traj_i)-\frac{1}{K}\sum_{j=1}^{K}\reward(\prompt,\traj_j)\right).
\end{align*}

Policy gradient algorithms are on-policy by nature: They rely on a new set of trajectories in each context $\traj \sim \policy(\cdot\mid\prompt)$ after each gradient update of the policy $\policy$.

\xhdr{Proximal policy optimization (PPO)} allows the updated policy to deviate slightly from a sampling policy~\citep{schulman2017proximal}.
It uses an importance weight to correct the magnitudes of parameter updates such that the expected policy gradient remains unbiased.
These importance weights are typically clipped to avoid divergence from a local trust region \citep{schulman2015trust}.
\begin{align*}
\obj_\mathrm{PPO} \defeq \E_{\prompt\sim\data,\traj\sim\policy(\cdot|\prompt)}\!\left[\!\frac{1}{\card{\traj}}\!\sum_{\action_t \in \traj} \min\left(\adv(\prompt,\traj)\cdot\w_t,\adv(\prompt,\traj)\cdot\w_t|_{1-\eps}^{1+\eps}\right)\!\right]
\quad \w_t \defeq \frac{\policy^{\text{new}}(\action_t\mid\prompt,\actions_{<t})}{\policy^{\text{old}}(\action_t\mid\prompt,\actions_{<t})},
\end{align*}
where $\w_t|_{1-\eps}^{1+\eps}$ clips the importance ratio from below $1-\eps$ and above $1+\eps$.
In our theoretical analysis, we will examine PPO with and without clipping. The version studied will be clear from the context.

\defn{LOOP}~\citep{chen2025reinforcement} and \defn{GRPO}~\citep{shao2024deepseekmath} combine the above PPO objective with RLOO leave-one-out advantage estimates.
GRPO rescales advantages by the standard deviation of the sample returns, introducing a small bias \citep{liu2025understanding}, while LOOP uses $\widehat{\adv}_{\mathrm{RLOO}}$ directly.
\begin{align*}
\widehat{\adv}_{\mathrm{GRPO}}(\prompt,\traj_i)\defeq\frac{\reward(\prompt,\traj_i)-\mathrm{mean}(\reward(\prompt,\traj_1),\dots,\reward(\prompt,\traj_K))}{\mathrm{std}(\reward(\prompt,\traj_1),\dots,\reward(\prompt,\traj_K))}
\end{align*}

\xhdr{Group Sequence Policy Optimization (GSPO)}~\cite{zheng2025group} uses a trajectory-level trust region defined by the geometric average of a sequence's probability ratios
\begin{align*}
\obj_\mathrm{GSPO} \defeq \E_{\prompt\sim\data,\traj\sim\policy(\cdot|\prompt)}\!\left[\min\!\left(\adv(\prompt,\traj)\cdot\w^\text{GSPO}\!\!,\adv(\prompt,\traj)\cdot\w^\text{GSPO}|_{1-\eps}^{1+\eps}\right)\!\right]
\quad \w^\text{GSPO} \!\defeq \left(\frac{\policy^{\text{new}}(\traj\mid\prompt)}{\policy^{\text{old}}(\traj\mid\prompt)}\right)^\frac{1}{|\traj|}.
\end{align*}
GSPO yields an equivalent gradient estimator to GRPO, LOOP, and RLOO on-policy, but clips tokens and trajectories differently as the updated policy $\policy^{\text{new}}$ drifts from the sampling policy $\policy^{\text{old}}$.

\xhdr{Policy entropy.}
The inherent uncertainty that a policy places over its generations may be expressed from an information theoretic standpoint as \defn{entropy} -- expected surprise: $\Entropy_{\policy}(\data) = -\E_{\prompt\sim\data}\left[\E_{\traj \sim \policy(\cdot\mid\prompt)}\left[\log \policy(\traj \mid \prompt)\right]\right]$.
In addition to global entropy, we may consider the entropy over actions at any given state $\state=(\prompt, \actions_{<t})$ as $
\Entropy_{\policy}(\cdot\mid\state) = -\E_{\action\sim\policy(\cdot\mid\state)}\left[\log\policy(\action\mid\state)\right]$.

In this paper, we analyze how state-wise entropy evolves as variants of policy gradient optimize their objectives.
We identify which algorithm variants are naturally entropy preserving, and which lead to rapid collapse (\cref{sec:theory}).
We demonstrate that subtle implementation details can distort entropy dynamics, causing unexpected collapse in algorithms that should theoretically preserve entropy (\cref{sec:empirical-findings}). 
Finally, we propose simple modifications of RL methods that  lead to effective entropy regularization and improve downstream task performance (\cref{sec:entropy-control}).

\section{Theory: Entropy dynamics of policy gradient}
\label{sec:theory}

The entropy dynamics of policy gradient RL boils down to the relationship between two values: (1) action log-probabilities, and (2) the advantages yielded by those actions.
Intuitively, assigning a positive advantage to some action increases its probability.
For high probability actions, this effect sharpens the distribution, and entropy decreases.
For low probability actions, this flattens the distribution, increasing entropy.
The opposite pattern holds for negative advantages.
This effect is natural: after all, sharpening an uncertain policy around correct actions directly maximizes the expected return.
However, as we will see, not all RL algorithms sharpen the distribution equally.

Formally, consider the policy gradient update with on-policy actions in state $\state$.
Under a first-order Taylor approximation to the training dynamics%\footnote{This approximation holds well for a single update with sufficiently small learning rate $\lr$.}
, the expected change in entropy is as follows.

\begin{restatable}[]{theorem}{MDPEntropy}
\label{thm:mdp-entropy-exact}
Given a policy gradient update $\hat \params := \params + \lr\cdot \grad_\params \obj_\mathrm{MDP}(\state)$, the expected change in entropy is approximately:
$$\Delta \Entropy_{\policy}(\cdot\mid\state)
\approx
- \lr \cdot \E_{\action \sim \policy(\cdot\mid\state),\action^\prime\sim\policy(\cdot\mid\state)}\left[\adv(\state,\action)\cdot\clogp(\state,\action^\prime)\cdot\score(\state,\action)^\top \score(\state,\action^\prime)\right].$$
$\clogp(\state,\action)\defeq\log\policy(\action\mid\state)-\E_{\action\sim\policy(\cdot\mid\state)}[\log\policy(\action\mid\state)]$ denotes mean-centered log-probabilities and $\score(\state,\action)\defeq\grad_\params\log\policy(\action\mid\state)$ is the score function for a policy $\policy$ evaluated at state $\state$ and action $\action$.
\end{restatable}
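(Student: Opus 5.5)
The argument is a first-order Taylor expansion of the state-wise entropy in the parameters, followed by evaluating the entropy gradient explicitly and substituting the REINFORCE form of the update direction. Write $\Entropy(\params) \defeq \Entropy_{\policy}(\cdot\mid\state) = -\sum_{\action}\policy(\action\mid\state)\log\policy(\action\mid\state)$. For the update $\hat\params = \params + \lr\, g$ with $g = \grad_\params \obj_\mathrm{MDP}(\state)$, a first-order expansion gives
\[
\Delta\Entropy \approx \lr\cdot \grad_\params \Entropy(\params)^\top g ,
\]
with error $O(\lr^2)$. This step is where the ``$\approx$'' in the statement comes from.

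\textbf{Step 1: the entropy gradient.} Differentiate term by term and use $\grad_\params \policy = \policy\,\score$:
\[
\grad_\params\Entropy = -\sum_{\action'}\policy(\action'\mid\state)\,\score(\state,\action')\bigl(\log\policy(\action'\mid\state)+1\bigr).
\]
The constant $+1$ contributes nothing, because $\E_{\action'\sim\policy}[\score(\state,\action')] = \grad_\params\sum_{\action'}\policy(\action'\mid\state) = 0$. By the same identity, subtracting the mean $\E_{\action\sim\policy}[\log\policy(\action\mid\state)]$ from the log-probability also changes nothing. Hence
\[
\grad_\params\Entropy = -\E_{\action'\sim\policy(\cdot\mid\state)}\bigl[\clogp(\state,\action')\,\score(\state,\action')\bigr].
\]
Centering is not strictly necessary, but it matches the form in the statement and makes clear that only deviations from the mean log-probability matter.

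\textbf{Step 2: substitute the policy gradient.} The REINFORCE identity from Section~\ref{sec:prelim}, specialized to the state $\state$, gives the update direction
\[
g = \E_{\action\sim\policy(\cdot\mid\state)}\bigl[\adv(\state,\action)\,\score(\state,\action)\bigr].
\]
Take the inner product of $\grad_\params\Entropy$ with $g$. Since $\action$ and $\action'$ are integrated independently, the product of the two expectations becomes a single expectation over independent draws $\action,\action'\sim\policy(\cdot\mid\state)$:
\[
\grad_\params\Entropy^\top g = -\E_{\action,\action'}\bigl[\adv(\state,\action)\,\clogp(\state,\action')\,\score(\state,\action)^\top\score(\state,\action')\bigr].
\]
Multiplying by $\lr$ gives the claimed expression.

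\textbf{Main obstacle.} The algebra is routine; the delicate part is interpreting ``expected change.'' In practice $g$ is a Monte Carlo estimate. Because $\Delta\Entropy$ is linear in $g$ to first order, its expectation equals the expression above for any unbiased estimator of $g$. This includes the RLOO advantage, whose baseline likewise vanishes by $\E[\score]=0$. I would state this explicitly, and note that the approximation ignores second-order terms in $\lr$, which is valid for a sufficiently small step size.
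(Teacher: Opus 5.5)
Your proposal is correct and follows the paper's proof essentially step for step. Both arguments take a first-order Taylor expansion of the state-wise entropy along the update direction. Both use the mean-centered entropy gradient, which the paper obtains from its REINFORCE lemma with reward $-\log\pi$ and you derive directly from the zero-mean score identity. Both then rewrite the inner product of the two expectations as a double expectation over independent draws $a, a'$. Your remark that, to first order, the same expression holds in expectation for any unbiased Monte Carlo estimate of the gradient is a correct addition the paper does not spell out.
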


\explain{Proof in \cref{proof:mdp-entropy-exact}}.
The entropy change is driven by a multiplicative relationship between action log-probabilities and the advantages yielded by those actions.
In an exact derivation, these are weighted by the score vector outer product.
With additional independence assumptions or a tabular softmax policy parameterization, this expression can be further simplified, resulting in a weighting by the action probabilities.
This yields the following corollary:
\begin{restatable}[]{corollary}{MDPEntropyApprox}
\label{thm:mdp-entropy-approx}
Assuming $\score(\state,\action)^\top \score(\state,\action^\prime) = 0$ for all $\action \ne \action^\prime$, the entropy change is proportional to:
$$\Delta \Entropy_{\policy}(\cdot\mid\state)
\propto -\E_{\action\sim\policy(\cdot\mid\state)}\left[\adv(\state,\action)\cdot\clogp(\state,\action)\cdot\policy(\action\mid\state)\right]$$
\end{restatable}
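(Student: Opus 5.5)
The plan is to start from \cref{thm:mdp-entropy-exact} and write the double expectation as an explicit double sum over actions:
$$\Delta \Entropy_{\policy}(\cdot\mid\state)\approx-\lr\sum_{\action}\sum_{\action^\prime}\policy(\action\mid\state)\policy(\action^\prime\mid\state)\,\adv(\state,\action)\,\clogp(\state,\action^\prime)\,\score(\state,\action)^\top\score(\state,\action^\prime).$$
By the orthogonality hypothesis, every off-diagonal term $\action\neq\action^\prime$ vanishes. Only the diagonal remains:
$$-\lr\sum_{\action}\policy(\action\mid\state)^2\,\adv(\state,\action)\,\clogp(\state,\action)\,\norm{\score(\state,\action)}^2 .$$
Rewriting one factor $\policy(\action\mid\state)$ as the sampling weight turns this back into an expectation:
$$-\lr\,\E_{\action\sim\policy(\cdot\mid\state)}\bigl[\adv(\state,\action)\,\clogp(\state,\action)\,\policy(\action\mid\state)\,\norm{\score(\state,\action)}^2\bigr].$$

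The remaining step, and the only real obstacle, is removing $\norm{\score(\state,\action)}^2$ so that it does not spoil the proportionality. The most natural interpretation is that the squared score norm is treated as (approximately) the same constant $c$ for all actions at the state. Then the expression equals $-\lr c\,\E[\adv\,\clogp\,\policy]$, and the proportionality constant is $\lr c>0$, so the sign is preserved.

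I would justify this with the tabular softmax parameterization mentioned just before the corollary, where $\score(\state,\action)=\mathbf e_\action-\policy(\cdot\mid\state)$. In that case the cross terms are $-\policy(\action)-\policy(\action^\prime)+\norm{\policy}^2$, which are not exactly zero. So I would instead rely on the independence/orthogonality assumption plus an equal-norm assumption. I would state the equal-norm assumption explicitly, for instance as normalized or isotropic per-action features, and absorb $c$ into the constant.

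An alternative, if one prefers not to assume constant norms, is to read ``$\propto$'' as ``up to a positive per-action weight,'' which still preserves the qualitative sign analysis. I would mention this as a remark.
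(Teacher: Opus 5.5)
Your proposal is correct and follows the paper's proof essentially step for step. The orthogonality hypothesis collapses the double expectation from \cref{thm:mdp-entropy-exact} to $-\lr\,\E_{\action\sim\policy(\cdot\mid\state)}[\policy(\action\mid\state)\,\adv(\state,\action)\,\clogp(\state,\action)\,\|\score(\state,\action)\|^2]$, and the paper then likewise treats $\|\score(\state,\action)\|^2$ as a constant with respect to the expectation to get the proportionality. Your side remark that tabular softmax scores are not actually orthogonal is accurate and consistent with the paper, which treats that case separately via \cref{thm:sm-mdp}.
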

\explain{Proof in \cref{proof:mdp-entropy-cov}}.
This latter form encodes the dominant behavior of entropy dynamics in a manner that is inherent to policy gradient.
Using this form, we explain the observed behaviors of various RL algorithms.
A similar derivation can be shown for tabular softmax policies~\citep[][see \cref{thm:sm-mdp} in \cref{proof:mdp-entropy-pg}]{cui2025entropy}.
\cref{thm:mdp-entropy-exact,thm:mdp-entropy-approx} tell us that the change in entropy is governed by a correlation between advantages and log-probabilities, weighted by action probability.\looseness=-1

\paragraph{Entropy dynamics of PPO.}
The biggest feature of PPO is its ability to train on slightly off-policy trajectories, given that the updated policy does not deviate from a trust region around the current policy.
This allows PPO to take multiple policy-improvement steps for a single set of trajectories.
The effect of these repeated updates is much larger policy updates between consecutive PPO steps, which empirically amplifies entropy collapse.
This being said, the clipping on PPO, when appropriately orchestrated, can protect against entropy collapse as well.
Clipping ensures that no policy gradient update is performed if the policy drifts outside a trust region $(1-\eps_\text{low})\cdot\pi^{\text{old}}_\theta(\action\mid\state) \le \pi^{new}_\theta(\action\mid\state) \le (1+\eps_\text{high})\cdot\pi^{\text{old}}_\theta(\action\mid\state)$.
This bounds the change in entropy:

\begin{restatable}[]{theorem}{PPOEntropyClip}
\label{thm:clip-ppo}
Proximal Policy Optimization (PPO) bounds the entropy $\Entropy_{\policy^{\text{new}}}(\cdot\mid\state)$ of the updated policy by the original policy entropy $\Entropy_{\policy^{\text{old}}}(\cdot\mid\state)$ such that:
\begin{align*}
(1-\eps_\text{low})\cdot\Entropy_{\policy^{\text{old}}}(\cdot\mid\state) \leq \Entropy_{\policy^{\text{new}}}(\cdot\mid\state) \leq (1+\eps_\text{high})\cdot\Entropy_{\policy^{\text{old}}}(\cdot\mid\state) 
\end{align*}
\end{restatable}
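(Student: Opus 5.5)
The plan is to work pointwise on the trust region and split the new entropy into a cross-entropy term and a KL term. Throughout I assume, as in the paragraph preceding the statement, that after the PPO update every action $\action$ at state $\state$ satisfies
$$(1-\eps_\text{low})\,\policy^{\text{old}}(\action\mid\state)\le\policy^{\text{new}}(\action\mid\state)\le(1+\eps_\text{high})\,\policy^{\text{old}}(\action\mid\state).$$
This is the idealization that clipping enforces: once an action leaves this band its gradient is zeroed.

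\textbf{Step 1: the decomposition.} Writing $-\log\policy^{\text{new}} = -\log\policy^{\text{old}} - \log(\policy^{\text{new}}/\policy^{\text{old}})$ gives the exact identity
\begin{align*}
\Entropy_{\policy^{\text{new}}}(\cdot\mid\state)
= \sum_{\action}\policy^{\text{new}}(\action\mid\state)\bigl(-\log\policy^{\text{old}}(\action\mid\state)\bigr)
- \mathrm{KL}\bigl(\policy^{\text{new}}(\cdot\mid\state)\,\|\,\policy^{\text{old}}(\cdot\mid\state)\bigr).
\end{align*}
The first term is the cross-entropy of $\policy^{\text{new}}$ relative to $\policy^{\text{old}}$.

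\textbf{Step 2: bound the cross-entropy.} Every surprisal $-\log\policy^{\text{old}}(\action\mid\state)$ is nonnegative. Multiplying the trust-region inequality by these surprisals and summing over actions therefore preserves both inequalities:
$$(1-\eps_\text{low})\,\Entropy_{\policy^{\text{old}}}(\cdot\mid\state)\le\text{cross-entropy}\le(1+\eps_\text{high})\,\Entropy_{\policy^{\text{old}}}(\cdot\mid\state).$$
Nonnegativity of the surprisals is what makes the bound purely multiplicative.

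\textbf{Step 3: upper bound.} Since $\mathrm{KL}\ge 0$, the decomposition gives $\Entropy_{\policy^{\text{new}}}\le$ cross-entropy $\le(1+\eps_\text{high})\Entropy_{\policy^{\text{old}}}$. This direction is exact.

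\textbf{Step 4: lower bound, the main obstacle.} The same argument only yields
$$\Entropy_{\policy^{\text{new}}}\ge(1-\eps_\text{low})\Entropy_{\policy^{\text{old}}}-\mathrm{KL}.$$
I would bound the KL term using the pointwise ratio constraint. With $w=\policy^{\text{new}}/\policy^{\text{old}}\in[1-\eps_\text{low},1+\eps_\text{high}]$,
$$\mathrm{KL}\le\chi^2(\policy^{\text{new}}\,\|\,\policy^{\text{old}})=\E_{\policy^{\text{old}}}\bigl[(w-1)^2\bigr]\le\max(\eps_\text{low},\eps_\text{high})^2.$$
This is second order in the clip range, while the sandwich is first order. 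Within the first-order (small learning rate, small $\eps$) regime already adopted for \cref{thm:mdp-entropy-exact}, the KL term is negligible, and the stated lower bound follows at the order of the approximation.

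I expect this to be the only delicate point. Without the first-order regime, the lower bound can fail for near-deterministic $\policy^{\text{old}}$ with tiny entropy, because the $O(\eps^2)$ term is then comparable to $\eps\,\Entropy_{\policy^{\text{old}}}$. So I would state explicitly that the lower inequality holds up to this second-order KL correction, consistent with the approximations used elsewhere in \cref{sec:theory}.
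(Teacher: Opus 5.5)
\textbf{The upper bound is fine; the lower bound has a gap.} Your Steps 1--3 are exactly the paper's argument for the upper bound: split into cross-entropy minus KL, then bound the cross-entropy using the pointwise ratio times nonnegative surprisals.

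The gap is Step 4. You obtain the lower bound only up to an $O(\eps^2)$ KL correction. You then claim the exact lower bound can fail for near-deterministic $\policy^{\text{old}}$. That claim is false: under the same pointwise trust-region assumption, the theorem's lower bound holds exactly.

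\textbf{The missing idea: swap the roles of the two policies.} The lower side of the trust region, $\policy^{\text{new}}(\action\mid\state)\ge(1-\eps_\text{low})\,\policy^{\text{old}}(\action\mid\state)$, is equivalently an \emph{upper} bound on the reverse ratio, $\policy^{\text{old}}/\policy^{\text{new}}\le 1/(1-\eps_\text{low})$. It also guarantees $\policy^{\text{new}}>0$ wherever $\policy^{\text{old}}>0$. Now run your Steps 1--3 with the two policies interchanged, decomposing $\Entropy_{\policy^{\text{old}}}$ against $\policy^{\text{new}}$:
\begin{align*}
\Entropy_{\policy^{\text{old}}}(\cdot\mid\state)
&= \sum_{\action}\policy^{\text{old}}(\action\mid\state)\bigl(-\log\policy^{\text{new}}(\action\mid\state)\bigr) - \KL\bigl(\policy^{\text{old}}\,\|\,\policy^{\text{new}}\bigr)\\
&\le \frac{1}{1-\eps_\text{low}}\sum_{\action}\policy^{\text{new}}(\action\mid\state)\bigl(-\log\policy^{\text{new}}(\action\mid\state)\bigr)
= \frac{1}{1-\eps_\text{low}}\,\Entropy_{\policy^{\text{new}}}(\cdot\mid\state).
\end{align*}
Multiplying by $1-\eps_\text{low}>0$ gives the stated lower bound with no approximation. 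This is how the paper proves it. It uses one one-sided lemma ($\pi/\phi\le 1+\eps$ implies $\Entropy(\pi)\le(1+\eps)\,\Entropy(\phi)$), applied once to the pair $(\policy^{\text{new}},\policy^{\text{old}})$ and once to $(\policy^{\text{old}},\policy^{\text{new}})$.

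\textbf{Why your Step 4 cannot be repaired as written.} In your decomposition the KL term always enters with the wrong sign for a lower bound on $\Entropy_{\policy^{\text{new}}}$. Bounding it by $\chi^2\le\max(\eps_\text{low},\eps_\text{high})^2$ therefore can never recover the exact inequality. Your concern about near-deterministic $\policy^{\text{old}}$ comes from that bound not scaling with the entropy; it is not a real failure of the inequality. As written, your proposal proves a strictly weaker, approximate statement than the theorem. With the role swap, Step 4 and the first-order caveat can be dropped entirely.
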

\explain{Proof in \cref{proof:clip-ppo}}.
The clipping thresholds directly limit the maximum induced change in entropy per token.
Intuitively, the change in entropy per token is stochastic: some actions have a large correlation between advantage and log probability; others do not, or even have an anti-correlation.
For a symmetric clipping regime, this results in an entropy change that largely follows the statistical trends outlined above, but at a lower magnitude.

\paragraph{Entropy dynamics of DAPO.}
Now consider DAPO \citep{yu2025dapo}, with an asymmetric clipping regime $\eps_{\text{low}} < \eps_{\text{high}}$.
This allows for larger entropy increases, while limiting the entropy decrease.
Due to the stochastic nature of the entropy changes, this directly contributes to an overall increase in per-token entropy over sufficient samples.
Threshold values $\eps_{\text{low}}=0.2$ and $\eps_{\text{high}}=0.28$ proposed in \cite{yu2025dapo} stabilize the entropy throughout training, as we show experimentally.

\paragraph{Entropy dynamics of GSPO.}
GSPO defines a trust region ${1 - \eps^\text{GSPO}_\text{low} \le \w^\text{GSPO} \le 1 + \eps^\text{GSPO}_\text{high}}$, or equivalently $\left(1 - \eps^\text{GSPO}_\text{low}\right)^{|\traj|} \le \frac{\policy^{\text{new}}(\traj\mid\prompt)}{\policy^{\text{old}}(\traj\mid\prompt)} \le \left(1 + \eps^\text{GSPO}_\text{high}\right)^{|\traj|}$.
This induces an equivalent bound to \cref{thm:clip-ppo}; however, the bound now depends on the trajectory length $|\traj|$.
Longer trajectories may induce a larger change in entropy, shorter trajectories induce a smaller change in entropy.
With parameter values suggested in ~\cite{zheng2025group},  ${\eps^\text{GSPO}_\text{low}=3 \times 10^{-4}}$ and ${\eps^\text{GSPO}_\text{high}=4 \times 10^{-4}}$, the entropy bound is tighter for trajectories $|\traj| < \frac{\ln(1\pm\eps)}{\ln(1\pm\eps^\text{GSPO})} \approx 600$ tokens compared to DAPO.
Like DAPO, the clipping range is asymmetric $\eps^\text{GSPO}_\text{low} < \eps^\text{GSPO}_\text{high}$ leading to a stochastic increase in entropy.

\paragraph{Summary.}
The theoretical analysis above reveals that entropy dynamics in policy gradient algorithms are governed by the correlation between advantages and log-probabilities.
PPO's multiple off-policy updates amplify entropy collapse, while clipping mechanisms can bound the entropy change per update. Asymmetric clipping (DAPO) and sequence-level clipping (GSPO) provide implicit entropy preservation by allowing larger entropy increases than decreases. However, these implicit mechanisms may not be sufficient in all settings.

Importantly, even strictly on-policy algorithms like RLOO are subject to the entropy dynamics described in \cref{thm:mdp-entropy-approx}: if the base policy is already well-calibrated to the reward function, the correlation between advantages and log-probabilities will be positive, and entropy will decrease. 
RLOO avoids the \emph{amplification} of this effect that arises from off-policy drift and repeated updates on recycled advantages, but does not eliminate the underlying dynamic. 
This explains why RLOO retains more entropy than PPO-based algorithms in most settings, yet can still exhibit meaningful entropy loss when the base model is strongly pre-calibrated to the task. 
Explicit entropy control mechanisms, which we present in \cref{sec:entropy-control}, can therefore be valuable even in on-policy settings.

\section{Empirical findings: Implementation Details Affecting Entropy}
\label{sec:empirical-findings}

We identify empirical factors that significantly impact entropy dynamics, discussed in this section.

\subsection{16-bit Quantization of Model Outputs Affects Clipping}
\label{sec:bf16-bias}

\begin{figure*}[b]
  \centering
  \begin{subfigure}[t]{0.34\textwidth}
    \centering
    \includegraphics[width=\textwidth]{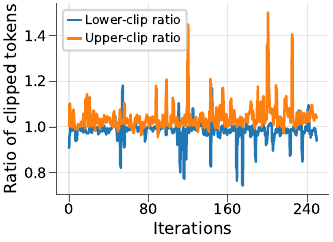}
    \caption{16-bit $\log\pi$ quantization effects.}
    \label{fig:bf16-clip-ratio}
  \end{subfigure}
  \hfill
  \begin{subfigure}[t]{0.62\textwidth}
    \centering
    \includegraphics[width=\textwidth]{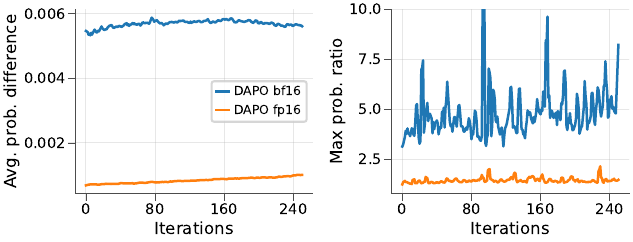}
    \caption{Inference/train discrepancy depending on dtype.}
    \label{fig:recomp-stats}
  \end{subfigure}
  \caption{(a)~Fraction of tokens hitting clip bounds with 16-bit rounding: $\eps_{\text{high}}$ is reached more often and $\eps_{\text{low}}$ less often, hindering the promotion of low-probability actions. (b)~Average probability difference and max importance weight ratio between vLLM inference and training forward pass.}
  \label{fig:bf16-effects}
  \vspace{-0.5em}

\end{figure*}

In PPO, GRPO, DAPO and similar methods, the clipped objective requires computing the probability ratio:
$r = \frac{\pi_\theta(a|s)}{\pi_{\theta_{\text{old}}}(a|s)}.$ 
By default, common LLM training stacks (e.g. \textit{HF Accelerate \& FSDP2}; \cref{sec:bf16_output_casting}) cast model outputs to the training dtype before downstream computations. As a result, the probability ratio is ultimately computed as:
$r_{\text{observed}} = \exp(\bfcast(\log \pi_\theta(a|s)) - \bfcast(\log \pi_{\theta_{\text{old}}}(a|s)))$
where $\bfcast(\cdot)$ denotes bfloat16 casting with round-to-nearest-even.

\begin{restatable}[]{theorem}{BFSixteenBias}
\label{thm:bf16-bias}
Under bf16 quantization, the observed ratio exhibits a multiplicative upward bias: $\mathbb{E}[r_{\text{observed}} \mid r_{\text{true}}] > r_{\text{true}}$. \explain{Proof in \cref{proof:bf16-bias}}.
\end{restatable}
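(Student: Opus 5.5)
We model bf16 rounding of each log-probability as additive noise and then apply Jensen's inequality to the exponential. Write $\ell_{\text{new}} = \log \pi_\theta(a|s)$ and $\ell_{\text{old}} = \log \pi_{\theta_{\text{old}}}(a|s)$. Both are nonpositive reals, and $\log r_{\text{true}} = \ell_{\text{new}} - \ell_{\text{old}}$. We write $\bfcast(\ell) = \ell + \delta(\ell)$, where $\delta$ is the round-to-nearest-even error. The observed ratio is then
$$r_{\text{observed}} = r_{\text{true}}\cdot \exp\bigl(\delta(\ell_{\text{new}}) - \delta(\ell_{\text{old}})\bigr).$$
The claim therefore reduces to showing $\mathbb{E}[\exp(D)\mid r_{\text{true}}] > 1$ for $D \defeq \delta_{\text{new}} - \delta_{\text{old}}$.

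\textbf{Randomness model.} The theorem conditions on $r_{\text{true}}$, so we need an explicit probabilistic model for the rounding errors. We will adopt the standard dithering-style assumption: conditional on $r_{\text{true}}$, the positions of $\ell_{\text{old}}$ (and of $\ell_{\text{new}}$) within their bf16 quantization cells are uniformly distributed. This makes each $\delta$ mean-zero and symmetric on $[-u/2,u/2]$, where $u$ is the local bf16 spacing (a power of two set by the binade of $\ell$).

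Since round-to-nearest-even is unbiased in this sense, $\mathbb{E}[D\mid r_{\text{true}}]=0$. The map $x\mapsto e^x$ is strictly convex and $D$ is nondegenerate, so Jensen gives
$$\mathbb{E}[e^{D}] > e^{\mathbb{E}[D]} = 1.$$
If $\delta_{\text{new}}$ and $\delta_{\text{old}}$ are independent, we can make the factor explicit:
$$\mathbb{E}[e^D] = \mathbb{E}[e^{\delta_{\text{new}}}]\,\mathbb{E}[e^{-\delta_{\text{old}}}] = \frac{\sinh(u_{\text{new}}/2)}{u_{\text{new}}/2}\cdot\frac{\sinh(u_{\text{old}}/2)}{u_{\text{old}}/2} > 1.$$
This yields the multiplicative bias $r_{\text{observed}}\approx r_{\text{true}}(1+(u_{\text{new}}^2+u_{\text{old}}^2)/24)$. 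It is worth remarking how large this is. For typical log-probabilities the spacing $u$ can be as large as $2^{-7}\cdot|\ell|$ scale, which is large enough to matter against $\eps\approx 0.2$ in clipping.

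\textbf{Main obstacle.} The delicate step is justifying nondegeneracy of $D$ and the zero-mean property without full independence. $\ell_{\text{new}}$ and $\ell_{\text{old}}$ are strongly coupled, since $r_{\text{true}}\approx 1$ and the two often fall in the same cell. In that case $D$ could be identically zero, and the inequality would only be weak. I would handle this by conditioning on $r_{\text{true}}$ and treating $\ell_{\text{old}}$ as uniform modulo the cell width. Then $\ell_{\text{new}} = \ell_{\text{old}} + \log r_{\text{true}}$ is a deterministic shift, and $D$ is a nonconstant function of a uniform variable with mean zero. The mean is zero because the shift is unbiased on average over the cell position. $D$ is nonconstant whenever $\log r_{\text{true}}$ is not a multiple of $u$, and this holds for almost every $r_{\text{true}}$. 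Strict Jensen then gives the strict inequality. For the remaining measure-zero cases of exact multiples we would either note equality or fold them in by averaging over binades.
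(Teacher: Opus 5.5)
Your core argument (mean-zero uniform rounding errors, strict Jensen for $\exp$, and under independence the factor $\frac{\sinh(u/2)}{u/2}$) is the paper's proof. The $\sinh$ factor is just the exact form of the paper's Taylor estimate $1+(\mathrm{ulp}_{\text{new}}^2+\mathrm{ulp}_{\text{old}}^2)/24$. The paper posits $\varepsilon_{\text{new}},\varepsilon_{\text{old}}\sim\mathrm{Uniform}(-\mathrm{ulp}/2,\mathrm{ulp}/2)$ independently and stops there.

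Where you genuinely depart is the coupled model in your last paragraph, and it is the more faithful model. Once you condition on $r_{\text{true}}$, the difference of the two rounded values lies on a bf16 lattice. So $D\in u\mathbb{Z}-\log r_{\text{true}}$, with $u$ the finer of the two spacings, and $D$ cannot be a difference of independent uniforms; the paper's assumption is only an unconditional heuristic. Working your model out (same binade, $f=(\log r_{\text{true}})\bmod u$) gives $D=-f$ with probability $1-f/u$ and $D=u-f$ with probability $f/u$. Hence $\mathbb{E}[D]=0$ and $\mathbb{E}[e^{D}]\approx 1+f(u-f)/2>1$ for $f\neq 0$, which confirms your claims.

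What your model buys is honesty about the statement's scope. Strictness fails on the lattice, in particular at $r_{\text{true}}=1$: whenever $\pi_\theta=\pi_{\theta_{\text{old}}}$ on a token, both log-probabilities round identically and $r_{\text{observed}}=1$ exactly. No averaging over binades changes that case, so state the exception rather than hedging. Your model also shows the multiplicative factor varies periodically with $\log r_{\text{true}}$. The paper's remark that the bias is a fixed factor times $r_{\text{true}}$ is therefore an artifact of its independence assumption. What the paper's model buys in return is a clean constant, independent of $r_{\text{true}}$.

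One caution on your aside: the bias is second order, only of order $10^{-4}$ even for $u=1/16$. What competes with $\eps\approx 0.2$ in clipping is the first-order rounding noise, not the bias.
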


Note that the bias is \textbf{proportional to $r_{\text{true}}$}: larger ratios experience proportionally larger absolute bias. This creates an effective asymmetric clipping that systematically favors entropy decrease: \\
\textbf{Upper clip}: For advantageous actions ($A > 0$, $r > 1$), the upward bias causes $r_{\text{observed}}$ to reach the upper clip earlier, limiting probability increases. \\
\textbf{Lower clip}: For disadvantageous actions ($A < 0$, $r < 1$), the upward bias pulls $r_{\text{observed}}$ toward 1, making the lower clip less restrictive. \Cref{fig:bf16-clip-ratio} empirically shows how $\eps_{\text{high}}$ is reached more frequently and $\eps_{\text{low}}$ less frequently with quantization compared to the full-precision calculation.

This behavior is equivalent to asymmetric clipping with $\eps_{\text{low}} > \eps_{\text{high}}$, the opposite direction from DAPO's entropy-preserving asymmetry!
While this bias exists in any finite-precision implementation, very limited precision of BF16 promotes this to a strong entropy-decreasing effect (\Cref{sec:bf16_output_casting}). A trivial solution for this problem is to simply use full precision for model outputs throughout the computation graph, including any importance ratio calculations.

\subsection{Float16 vs BFloat16 training}
\label{sec:fp16}

\begin{wrapfigure}{r}{0.52\linewidth}
  \centering
  \vspace{-4em}
  \includegraphics[width=\linewidth]{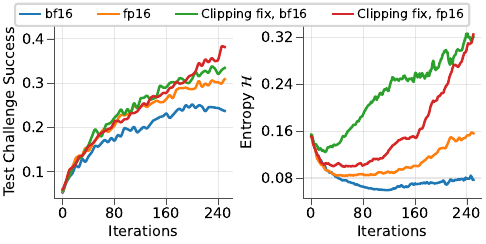}
  \caption{FP16 training on \ttt{Qwen-3-8B} AppWorld and clipping fix lead to a qualitative change: DAPO entropy collapse transitions to entropy increase.}
  \label{fig:ablation}
  \vspace{-1.1em}
\end{wrapfigure}

It is customary to use the BF16 floating-point type in LLM training because of its larger dynamic range. However, \citet{qi2025precisionrl} report improved results with \textit{float16} (FP16) as its additional mantissa bits enable more accurate gradient representation. 
Using FP16 format significantly reduces the discrepancies between the LLM inference (vLLM) and training subsystem, inherent to the modern post-training stacks (\Cref{fig:recomp-stats}).

In practice, with appropriate loss and gradient scaling, FP16 training tends to mitigate entropy collapse and yield more stable and predictable training. To highlight the importance of these empirical findings: FP16 training in conjunction with the $\log\pi_\theta$ rounding fix (\Cref{sec:bf16-bias}) results in qualitatively different entropy dynamics, allowing DAPO's entropy-increasing asymmetric clipping to overcome collapse (\Cref{fig:ablation}).

%%%%%%%%%%%%%%%%%%%%%%%%%%%%%%%%%%%%%%%%%%%%%%%%%%%%%%%%%%%%%%%%%%%%%%%%%%%%%
\section{Explicit Entropy Control Methods}
\label{sec:entropy-control}

The theory in \cref{sec:theory} and empirical analysis in \cref{sec:empirical-findings} reveal that entropy dynamics are influenced by many factors and that minor implementation details can qualitatively change the algorithm behavior. While implicit mechanisms (asymmetric or sequence-level clipping) provide some level of control, an explicit entropy regulation technique may be required for stable RL post-training.

A standard approach to entropy preservation is adding an explicit entropy bonus to the objective:
$\obj_{\text{entropy}} = \obj_{\text{PPO}} + \beta \cdot \Entropy_{\policy}.$
However, this approach has significant drawbacks:\\
\textbf{Fixed coefficient}: A fixed $\beta$ does not account for the evolving entropy dynamics over training.\\
\textbf{Memory cost}: Computing the entropy bonus exactly requires materializing all logits, which is memory-intensive for large vocabularies, especially compared to memory-efficient methods such as Cut Cross-Entropy ~\citep[CCE, ][]{wijmans2024cut}.

Below, we address both issues by proposing an adaptive entropy controller and a paired-sampling estimator that jointly estimates the policy and entropy gradients without materializing full logits.

\subsection{REPO: Regulated entropy policy optimization}
\label{sec:repo}

We propose REPO (Regulated Entropy Policy Optimization), which modifies the advantage function to include a scaled policy log-likelihood term:
$\adv_{\mathrm{REPO}}(\state,\action) = \adv(\state,\action) - \beta_{\state}\cdot\clogp(\state,\action)$
for each $\state = (\prompt, \actions_{<t})$.
This updated advantage is no longer constant throughout the trajectory like in RLOO and variants, but differs for individual tokens $\action_t \in \traj$.

Following~\cref{thm:mdp-entropy-exact} by~\cref{cor:repo-entropy-delta}, the induced change in entropy with $\adv_{\mathrm{REPO}}$ is:
\begin{align*}
\Delta \Entropy_{\policy}^\text{REPO}(\cdot\mid\state)
&\approx \Delta \Entropy_{\policy}(\cdot\mid\state) + \beta_{\state}\cdot \underbrace{\lr\cdot \left\|\E_{\action \sim \policy(\cdot\mid\state)}\left[\clogp(\state,\action)\cdot\score(\state,\action)\right]\right\|^2}_{\ge 0}.
\end{align*}
This provides a direct mechanism to control entropy where $\beta_{\state}>0$ increases entropy relative to the default dynamic and $\beta_{\state}<0$ downregulates entropy.

\paragraph{REPO-D (Decorrelate).}
One natural choice is to counteract entropy collapse on a per-token level by setting ${\beta_{\state}^\text{REPO-D} \propto -\Delta \Entropy_{\policy}(\cdot\mid\state)}$ as approximated in \cref{thm:mdp-entropy-approx}. This neutralizes $\Delta \Entropy_{\policy}$, allowing $\Delta \Entropy_{\policy}^\text{REPO}$ to approach 0.

\paragraph{REPO-R (Rescale).}
REPO-R is an efficient practical approximation that captures the core intuition: increasing policy entropy requires upweighting rare correct solutions, while reducing (on average) the penalty assigned to rare incorrect solutions. 
This is accomplished by rescaling advantages based on action probabilities via $\beta_{\state,\action}^\text{REPO-R} = \zeta \, |\adv(\state,\action)|$. The derivation from the general REPO advantage and implementation details are provided in \cref{app:repo-r-derivation}.

\paragraph{Adaptive controller.}
The optimal scale of the regularizer depends on learning rate, gradient structure, second-order effects, etc. We use a simple adaptive bidirectional controller:
\begin{enumerate}[leftmargin=*]
\item Estimate $\Entropy_{\policy}^{\text{init}}$, the policy entropy over experience collected in the first iteration; set $\zeta~\leftarrow~10^{-3}$.
\item Each iteration: if $\Entropy_{\policy} < \Entropy_{\policy}^{\text{init}}$, update $\zeta \leftarrow \zeta\times2$; if $\Entropy_{\policy} > \Entropy_{\policy}^{\text{init}}$, update $\zeta \leftarrow \zeta\div2$.
\item Flip the sign of $\zeta$ to exert pressure in the opposite direction: if $|\zeta| < \zeta_{\text{min}}$ set $\zeta \leftarrow -\zeta$.
\item Clip magnitude of $|\zeta|$ to $[\zeta_{\text{min}}, \zeta_{\text{max}}]$. In our experiments we used $[10^{-3}, 10]$ for REPO-D and $[10^{-4}, 0.05]$ for REPO-R.
\end{enumerate}

\paragraph{Efficient estimation.}
Both REPO-D and REPO-R can be effectively estimated using only the log-probability of the \emph{sampled} token $\action_t$, which is already available during the forward pass when using CCE~\citep{wijmans2024cut}.
This stands in contrast to an explicit entropy bonus, which requires materializing the full logit vector over the vocabulary.
We show in \cref{sec:repo-entropy-connection} that REPO-D is formally equivalent to such an entropy bonus, but estimated via REINFORCE using paired samples, incurring \textbf{zero additional memory cost} and acting as a \textbf{control variate} that reduces gradient variance when advantages and log-probabilities are positively correlated, which is typical.

\subsection{ADAPO: Adaptive Asymmetric Clipping}
\label{sec:adapo}

An alternative approach is to dynamically adjust the asymmetric clipping thresholds, taking advantage of DAPO's entropy-preserving properties. ADAPO (Adaptive DAPO) keeps $\eps_{\text{low}}=0.2$ fixed and initializes $\eps_{\text{high}}=0.28$ (as in DAPO), then varies $\eps_{\text{high}}$ in the range $[0.2, 0.32]$ based on the observed entropy:
(1) If $\Entropy_{\policy} < \Entropy_{\policy}^{\text{init}}$:  $\eps_{\text{high}} \leftarrow \min(1.05\times \eps_{\text{high}},0.32)$ (allow more entropy increase)
(2) If $\Entropy_{\policy} > \Entropy_{\policy}^{\text{init}}$: decrease $\eps_{\text{high}} \leftarrow \max(0.95\times \eps_{\text{high}}, 0.2)$ (limit entropy increase).
This provides bidirectional control over entropy through the clipping mechanism. Note that REPO can be used with many popular RL methods (RLOO, GRPO, DAPO, etc.) while ADAPO utilizes asymmetric clipping and is relevant to methods like DAPO or GSPO. In our experiments, we evaluate REPO-R on top of GRPO and ADAPO on top of DAPO (\cref{sec:expts}). RL methods using adaptive asymmetric clipping were independently proposed in contemporaneous work (\cite{xi2025bapo}).

%%%%%%%%%%%%%%%%%%%%%%%%%%%%%%%%%%%%%%%%%%%%%%%%%%%%%%%%%%%%%%%%%%%%%%%%%%%%%
\section{Experiments}
\label{sec:expts}

We evaluate whether entropy-preserving training yields improvements to strong models on challenging environments when compared to state-of-the-art learning algorithms.
We choose \ttt{Qwen-3-8B} and \ttt{Qwen-3-32B} as our starting policies~\citep{yang2025qwen3}.

\xhdr{Environments.}
\textit{Interactive tool-use agent}.
Training scenarios are drawn from the train split (90 problems) of the AppWorld benchmark \citep{trivedi2024appworld}.
The AppWorld Test Normal (\textit{TN}, 168 tasks) and Test Challenge (\textit{TC}, 417 tasks) splits are used for evaluation.
Terminal reward is calculated via task-provided unit-tests that check the final state of the environment against ground truth (additional details in~\cref{app:details-tasks}).
\textit{Competition-level mathematics}.
Training scenarios are drawn from a non-overlapping quality-filtered subset of the AMC/AIME section of NuminaMath-1.5 \citep[563 problems; ][]{numina_math_datasets}.
AIME 2024 (30 problems) and AIME 2025 (30 problems) are used as evaluation datasets.
Terminal reward indicates whether the generated answer matches the reference. We note that recent models are significantly overfit to math benchmarks so we strictly limit token budget to 4096 in AIME to create a challenging learning problem.

\xhdr{Algorithms.}
For each algorithm, we highlight its distinguishing features with otherwise minimal deviations from the base policy gradient to aid reproducibility (thus, some details and hyperparameter choices may differ slightly from original sources).

\textit{RLOO}: REINFORCE with the $\widehat{\adv}_{\mathrm{RLOO}}$ advantage estimator. Training is strictly on-policy (1 epoch) with a large minibatch that comprises all collected experience.
\textit{GRPO}: Off-policy extension of RLOO that uses normalized Leave None Out (LNO) estimator $\widehat{\adv}_{\mathrm{GRPO}}$ and symmetric PPO clipping with $\eps=0.2$. With GRPO and all algorithms below we train on the collected experience for 2 epochs with the minibatch size of 128 (AppWorld) or 256 (AIME) trajectories. 
\textit{LOOP}: A variant of GRPO with non-normalized estimator $\widehat{\adv}_{\mathrm{RLOO}}$ (RL SOTA on AppWorld at the time of writing).
\textit{DAPO}: a variant of LOOP with asymmetric clipping ($\eps_{\text{low}}=0.2$, $\eps_{\text{high}}=0.28$).
\textit{GSPO}: An adjustment of DAPO using $\w^\text{GSPO}$ importance weighting and trajectory-based clipping with $\eps^\text{GSPO}_\text{low}=3 \times 10^{-4}$ and $\eps^\text{GSPO}_\text{high}=4 \times 10^{-4}$.
\textit{REPO-R}: modification of GRPO with the additional entropy control mechanism.
\textit{ADAPO}: Adaptive DAPO that adjusts $\eps_{\text{high}}$ based on observed entropy dynamics (\cref{sec:adapo}).

\subsection{Variable Entropy Dynamics Across Algorithms}

\begin{figure*}[h]
  \centering
  \includegraphics[width=0.95\textwidth]{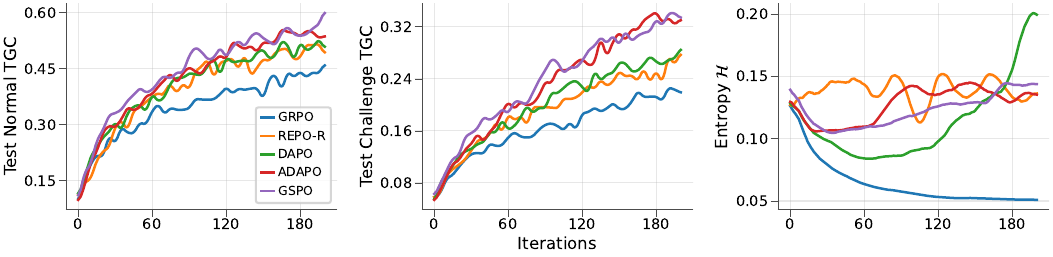}
  \caption{Entropy-preserving methods compared to baselines with \ttt{Qwen-3-8B} on AppWorld.}
  \label{fig:appworld-curves-8b-camera}
  \vspace{-0.2cm}
\end{figure*}

\begin{figure*}[h]
  \centering
  \includegraphics[width=0.95\textwidth]{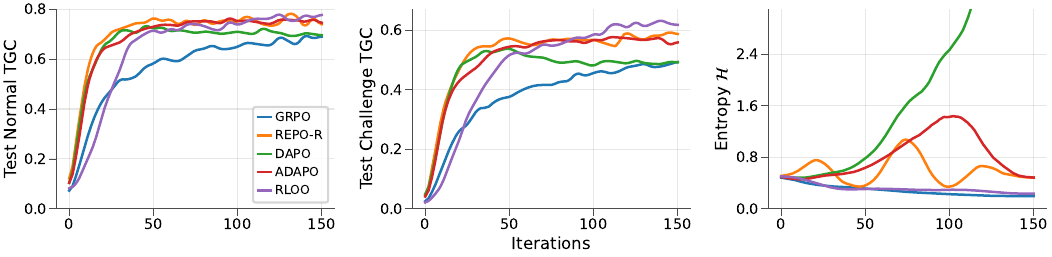}
  \caption{Entropy-preserving RL training with \ttt{Qwen-3-32B} on AppWorld.}
  \label{fig:appworld-curves-32b-camera}
  \vspace{-0.2cm}
\end{figure*}

\begin{figure*}[h]
  \centering
  \includegraphics[width=0.95\textwidth]{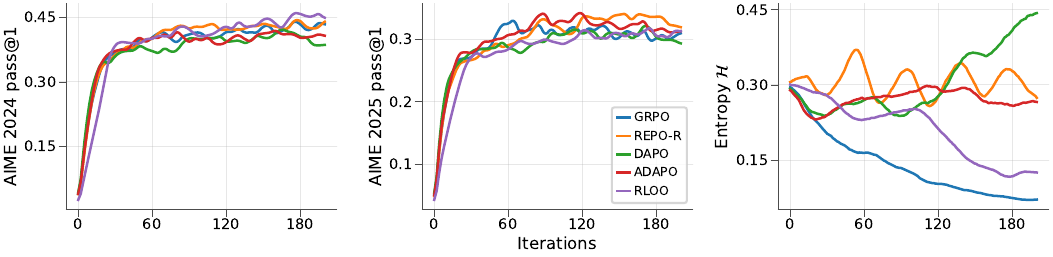}
  \caption{Entropy-preserving RL training with \ttt{Qwen-3-8B} on AIME.}
  \label{fig:aime-curves-8b-camera}
  \vspace{-0.2cm}
\end{figure*}

We observe consistent patterns across AppWorld (\cref{fig:appworld-curves-8b-camera,fig:appworld-curves-32b-camera}) and AIME experiments (\cref{fig:aime-curves-8b-camera}):

\xhdr{PPO-like algorithms deplete entropy faster than strictly on-policy.}
GRPO reduces entropy by nearly $90\%$ over training, while RLOO loses considerably less. LOOP behaves very similarly to GRPO and thus omitted for readability. See comprehensive results summary in \cref{app:result-tables}.

\xhdr{Clipping modifications protect entropy.}
Following the intuition provided in \cref{sec:theory}, DAPO and GSPO retain considerably more entropy. Confirming our observations in \Cref{sec:empirical-findings}, DAPO's entropy can uncontrollably increase in some experiments without an entropy-control mechanism (\cref{fig:appworld-curves-32b-camera}).

\xhdr{Entropy-preserving methods outperform baselines.}
REPO-R and ADAPO score higher than their off-policy baselines (GRPO and DAPO) and maintain steady policy entropy throughout training.

\begin{figure*}[t]
  \centering
  \includegraphics[width=0.9\textwidth]{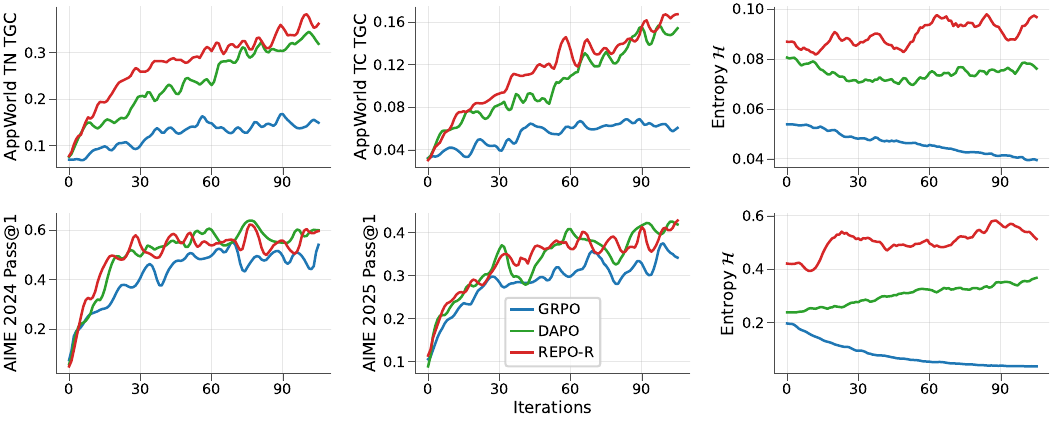}
  \caption{Sequential learning experiment. Top row: We use an AIME-trained model for GRPO, DAPO, REPO-R, and continue training the model on AppWorld. The left and middle plots show Task Goal Completion (TGC) on the normal (TN) and challenging (TC) test sets.
  A collapsed model (GRPO) does significantly worse than one in which entropy is preserved.
  Bottom row: We use an AppWorld-trained model and continue training on AIME. The same trends hold. All curves reflect the mean across three independent seeds.}
  \label{fig:sequential-curves-combined}
  \vspace{-0.2cm}
\end{figure*}

\subsection{Entropy preservation and downstream performance}

We evaluate the effect of entropy preservation on downstream performance.
See \cref{fig:entropy_teaser} for a preview of these results.
We find that methods that preserve per-token entropy, maintaining a higher cumulative entropy over training, yield higher final test accuracy than those that don't.
These trends are stronger on AppWorld than AIME.
We hypothesize that \ttt{Qwen-3} family of models is heavily optimized for AIME, and so this optimization may have primarily involved sharpening around existing solutions.
AppWorld, on the other hand, requires considerable exploration  to discover new capabilities.

\subsection{Entropy preservation assists sequential training}

We evaluate how well different algorithms support further RL fine-tuning on a different task (i.e. sequential training). To that end,
we first train \ttt{Qwen-3-8B} on either the AIME or AppWorld.
We then take the best checkpoint as the starting point for training on the opposing environment.
\cref{fig:sequential-curves-combined} shows that policies trained with GRPO perform poorly in the second training stage: due to entropy collapse, they lose their ability to explore.
On the other hand, DAPO, and especially REPO, start re-training with ample entropy and retain their exploration ability over the course of training.

\subsection{Numerical precision stabilizes entropy and performance}
\label{sec:results-numerical}
Figure~\ref{fig:ablation} shows that for \ttt{Qwen-3-8B} AppWorld training the numerical fixes have a dramatic impact: DAPO, which previously exhibited entropy collapse in this setting, now shows rapid entropy increase as the analysis of its asymmetric clipping design suggests. This shows that the observed entropy dynamics are highly sensitive to implementation details that may not be immediately apparent, and that some previously reported entropy collapse phenomena may have been artifacts of numerical precision rather than fundamental properties of algorithms.

\xhdr{RLOO achieves state-of-the-art performance.}
After switching to FP16 training (\cref{sec:empirical-findings}), purely on-policy RLOO sets the highest score at the time of submission on the AppWorld benchmark: our best checkpoint scored \textbf{79\% Test Normal} and \textbf{71\% Test Challenge} with \ttt{Qwen-3-32B}.

\section{Related work}

Reinforcement learning has emerged as the dominant paradigm for aligning pre-trained language models \citep{ziegler2019fine,stiennon2020learning,ouyang2022training}.
This approach has been successfully scaled in environments yielding verifiable rewards such as programming and mathematics \citep{jaech2024openai,lambert2024tulu,comanici2025gemini,guo2025deepseek,team2025kimi}.\looseness=-1

Empirically, training in this setting has typically been viewed as sharpening the base policy around existing solutions rather than yielding new ones \citep{gandhi2025cognitive,liu2025understanding,yue2025does,zhao2025echo}.
A good pre-trained base policy starts off already calibrated to many reasonable reward functions, and post-training can be viewed as tempering this distribution \citep{kadavath2022language,cui2025entropy}.
In fact, several works directly exploit this calibration to drive accuracy improvements via unsupervised post-training.
\citet{agarwal2024policy} simply minimize entropy, \citet{prasad2024self,zhang2025right,zuo2025ttrl} align to the model's majority vote distribution, \citet{wang2025reinforcement} get by with a single labeled sample, and \citet{shao2025spurious} even use random rewards.
All of these works can be explained by simply allowing policy gradient to sharpen an already calibrated base policy.
While this type of approach can help \passone, it harms \passk \citep{shao2024deepseekmath,dang2025assessing,yue2025does}.

Some works protect against this pathological entropy collapse using modified policy gradient objectives.
\citet{he2025rewarding} add auxiliary rewards to solutions as a function of their probability rank within a batch.
\citet{yu2025dapo} introduce wider PPO clipping to encourage stronger reinforcement of low probability correct actions.
\citet{zheng2025group} propose sequence-level clipping more independent of individual action probabilities.
\citet{chen2025pass} reformulate online policy gradient to optimize \passk as opposed to \passone.
Most similarly to our work, \citet{cui2025entropy,xi2025bapo,wang2026entropy} derive theoretical results regarding the covariance between advantages and probabilities mediating entropy collapse and then propose different approaches to counter this.

Other works impose a $\KL$ penalty during training as an approach for preserving the base policy \citep[e.g.,][etc.]{ziegler2019fine,guo2025deepseek}.
However, it has been shown that such an approach limits how much the policy can learn \citep{korbak2022rl,yang2024asymptotics,wu2025limits}.
For this reason, \citet{chen2025reinforcement,yu2025dapo} remove the $\KL$ penalty, \citep{vassoyan2025ignore} ignore it for a subset of tokens, and \citep{liu2025prorl} iteratively reset the reference policy.

\section{Conclusion}

In this work, we argue that entropy should be actively monitored and controlled throughout reinforcement learning training for language models.
We provide a theoretical analysis showing how policy gradient objectives modulate entropy dynamics, explaining why algorithms like GRPO exhibit entropy collapse while DAPO and GSPO provide implicit preservation.
We identify critical empirical factors, notably numerical precision (BF16 vs FP16) and framework behaviors (FSDP2 output casting), that impact entropy dynamics and training instabilities.
Building on these insights, we propose explicit mechanisms for entropy control: REPO, which modifies the advantage function, and ADAPO, which adaptively adjusts clipping thresholds.
Our entropy-preserving methods perform strongly on AIME and AppWorld, outperforming their baseline counterparts (GRPO and DAPO) and improving sequential learning.
We also report state-of-the-art results on AppWorld at the time of submission (79\% Test Normal, 71\% Test Challenge with RLOO and FP16 training).

We identify a distinction between \textbf{strictly on-policy} algorithms like RLOO and \textbf{weakly on-policy} algorithms like GRPO and GSPO.
Our results show that, with proper numerical handling, strictly on-policy RLOO achieves the best performance overall.
However, strictly on-policy training requires synchronous updates, which creates a bottleneck in distributed systems.
Weakly on-policy methods enable asynchronous training pipelines where trajectory collection and policy updates can proceed in parallel, significantly improving throughput.
The entropy-preservation mechanisms we propose (REPO, ADAPO) are compatible with both paradigms and can help weakly on-policy methods approach the performance of strictly on-policy training while maintaining the throughput benefits of asynchronous execution.

Overall, we highlight that entropy (and the corresponding exploration capability) is crucial for effective policy optimization and should be treated as a first-class concern in RL training pipelines.

\newpage

\subsubsection*{Ethics statement}

This paper investigates the properties of policy gradient algorithms for language model reasoning, specifically focusing on the tendency for entropy collapse during training. Our research is primarily theoretical and analytical, involving mathematical analysis and algorithm development. Our work aims to improve entropy during reinforcement learning, which can lead to better exploration and wider diversity in generated outputs. We acknowledge the potential for misuse of advanced language models, including the generation of biased, harmful, or misleading content. We believe that responsible research practices, including transparency in model limitations and potential societal impacts, are crucial for mitigating these risks, and we hope that our research contributes to the development of more robust, creative, and beneficial language models.

\subsubsection*{Reproducibility statement}

Complete proofs for all theoretical claims, along with experimental details and hyperparameters, are included in the appendix.
All data points presented in this work are the result of multiple repetitions of each experiment using independent random seeds.

\subsubsection*{Use of large language models for writing}
We acknowledge the use of large language models to assist with typographical corrections, phrasing, and self-review aimed at improving the clarity and structure of this manuscript.

\bibliography{iclr2026_conference}
\bibliographystyle{iclr2026_conference}

\newpage

\appendix

\newpage

\section{Proofs \& Derivations}

\subsection{Broadly used Lemmas}

\begin{lemma}
\label{lemma:grad-score-fn}
The expected score function of policy $\policy$ at some state $\state$ is:
$$\E_{\action\sim\policy(\cdot\mid\state)}\left[\grad_\params \log\policy(\action\mid\state)\right]=0$$
\end{lemma}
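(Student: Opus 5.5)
The plan is the standard log-derivative argument, applied to the expectation over the finite action set $\actionspace$. Write the expectation explicitly as a sum, $\E_{\action\sim\policy(\cdot\mid\state)}[\grad_\params\log\policy(\action\mid\state)] = \sum_{\action\in\actionspace}\policy(\action\mid\state)\,\grad_\params\log\policy(\action\mid\state)$. Then use the identity $\policy(\action\mid\state)\,\grad_\params\log\policy(\action\mid\state)=\grad_\params\policy(\action\mid\state)$, which holds wherever $\policy(\action\mid\state)>0$. This is always the case for a softmax-parameterized language model, so the sum becomes $\sum_{\action}\grad_\params\policy(\action\mid\state)$.

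Because $\actionspace=\alphabet\cup\set{\eos}$ is finite, the gradient can be pulled outside the sum, giving $\grad_\params\sum_{\action}\policy(\action\mid\state)$. Normalization of the conditional distribution gives $\sum_{\action}\policy(\action\mid\state)=1$ for every $\params$. The gradient of this constant is therefore $0$, which completes the argument.

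No step is a genuine obstacle. The only points needing care are:
\begin{itemize}
\item \emph{Differentiability and positivity:} $\policy(\action\mid\state)$ must be differentiable in $\params$ and strictly positive, so that $\log\policy$ and its gradient are defined. This is guaranteed by the softmax output layer.
\item \emph{Exchanging gradient and sum:} this is trivial here because the sum is finite. If one instead wanted the trajectory-level version, over $\traj\sim\policy(\cdot\mid\prompt)$, one would either assume bounded trajectory length or apply the per-state result inductively through the autoregressive factorization.
\end{itemize}
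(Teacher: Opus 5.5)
Your proof is correct and follows the paper's argument step for step: expand the expectation as a sum over actions, use $\pi_\theta \nabla_\theta \log \pi_\theta = \nabla_\theta \pi_\theta$, pull the gradient outside the finite sum, and differentiate the normalization constant $1$. Your remarks on strict positivity under a softmax parameterization and on the finiteness of the action set justify steps the paper leaves implicit.
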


\begin{proof}
\begin{align*}
\E_{\action\sim\policy(\cdot\mid\state)}\left[\grad_\params \log\policy(\action\mid\state)\right]
&= \sum_\action  \policy(\action\mid\state) \cdot \grad_\params \log\policy(\action\mid\state)  \\
&= \sum_\action \grad_\params \policy(\action\mid\state) \\
&= \grad_\params \sum_\action \policy(\action\mid\state) \\
&= \grad_\params (1) \\
&= 0
\end{align*}
\end{proof}

\begin{lemma}
\label{lemma:exp-grad}
The gradient of a sample estimate $\E_{x \sim P_\params}\left[f_\params(x)\right]$ of function $f_\theta$ over distribution $P_\theta$ is:
$$
\grad_\params \E_{x \sim P_\params}\left[f_\params(x)\right] = \E_{x \sim P_\params}\left[\grad_\params f_\params(x) + f_\params(x)\cdot \grad_\params \log P_\params(x)\right]
$$
\end{lemma}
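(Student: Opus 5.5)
The plan is to prove the identity by writing the expectation as an explicit sum (or integral) over $x$ and differentiating under the sum. The approach mirrors the proof of \cref{lemma:grad-score-fn}.

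For a discrete sample space, write $\E_{x \sim P_\params}[f_\params(x)] = \sum_x P_\params(x)\, f_\params(x)$. Differentiating term by term with the product rule gives
\begin{align*}
\grad_\params \sum_x P_\params(x) f_\params(x) = \sum_x P_\params(x)\,\grad_\params f_\params(x) + \sum_x f_\params(x)\,\grad_\params P_\params(x).
\end{align*}

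The first sum is already $\E_{x\sim P_\params}[\grad_\params f_\params(x)]$. For the second sum, use the log-derivative identity $\grad_\params P_\params(x) = P_\params(x)\,\grad_\params \log P_\params(x)$, which is valid wherever $P_\params(x) > 0$. Points with $P_\params(x)=0$ contribute nothing to the expectation, so we restrict the sum to the support. This turns the second sum into $\E_{x\sim P_\params}[f_\params(x)\cdot \grad_\params \log P_\params(x)]$. Adding the two terms gives the claimed formula.

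The only nontrivial step is justifying the interchange of gradient and sum (or integral). In the language-model setting the sums are over a finite vocabulary, or over strings where we assume the sum converges. I would simply assume enough regularity for the interchange: $P_\params$ and $f_\params$ differentiable in $\params$, with dominated convergence applicable. This is the same implicit assumption already made in \cref{lemma:grad-score-fn}, and the continuous case follows identically with integrals in place of sums.
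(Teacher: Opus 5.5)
Your proposal is correct and follows the same route as the paper. Both write the expectation as $\sum_x P_\params(x) f_\params(x)$, apply the product rule, substitute $\grad_\params P_\params(x) = P_\params(x)\grad_\params \log P_\params(x)$, and recombine into a single expectation. Your remarks on the support and on interchanging the gradient with the sum are extra care that the paper leaves implicit. For the zero-probability points, the precise reason they drop out of $\sum_x f_\params(x)\grad_\params P_\params(x)$ is that a nonnegative differentiable $P_\params(x)$ that vanishes at $\params$ has zero gradient there.
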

\begin{proof}
\begin{align*}
\grad_\params \E_{x \sim P_\params}\left[f_\params(x)\right]
&= \sum_x \grad_\params \left(P_\params(x)\cdot f_\params(x)\right)\\
&= \sum_x \left(P_\params(x)\cdot \grad_\params f_\params(x) + f_\params(x)\cdot \underbrace{\grad_\params P_\params(x)}_{P_\params(x) \grad_\params \log P_\params(x)}\right)\\
&= \sum_x P_\params(x) \left(\grad_\params f_\params(x) + f_\params(x)\cdot \grad_\params \log P_\params(x) \right)\\
&= \E_{x \sim P_\params}\left[\grad_\params f_\params(x) + f_\params(x)\cdot \grad_\params \log P_\params(x)\right]
\end{align*}
\end{proof}

\begin{lemma}
\label{lemma:grad-baseline}
The gradient of a sample estimate $\E_{x \sim P_\params}\left[f_\params(x)\right]$ of function $f_\theta$ over distribution $P_\theta$ can be baselined for any arbitrary $\baseline$ independent of $x$:
$$
\grad_\params \E_{x \sim P_\params}\left[f_\params(x)-\baseline\right] = \grad_\params \E_{x \sim P_\params}\left[f_\params(x)\right]
$$
\end{lemma}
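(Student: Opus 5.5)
The statement is Lemma~\ref{lemma:grad-baseline}: subtracting a constant $\baseline$ inside the expectation does not change its gradient. The plan is to split the expectation by linearity and show that the baseline term contributes zero gradient.

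First, write
\[
\E_{x \sim P_\params}\left[f_\params(x)-\baseline\right] = \E_{x \sim P_\params}\left[f_\params(x)\right] - \baseline\cdot\E_{x\sim P_\params}[1].
\]
Here $\E_{x\sim P_\params}[1]=\sum_x P_\params(x)=1$ for every $\params$. So the second term is the constant $\baseline$, and its gradient vanishes provided $\baseline$ does not depend on $\params$.

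As a cross-check, I will also run the argument through the earlier lemmas. Apply Lemma~\ref{lemma:exp-grad} to $g_\params(x)=f_\params(x)-\baseline$. Since $\grad_\params g_\params=\grad_\params f_\params$, the result is
\[
\E\left[\grad_\params f_\params(x) + f_\params(x)\,\grad_\params\log P_\params(x)\right] - \baseline\,\E\left[\grad_\params\log P_\params(x)\right].
\]
The last expectation is zero by the computation in Lemma~\ref{lemma:grad-score-fn}: the sum of $\grad_\params P_\params(x)$ over $x$ equals $\grad_\params 1$. Applying Lemma~\ref{lemma:exp-grad} again, the remaining terms are exactly $\grad_\params \E[f_\params(x)]$.

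The only subtle point is what ``independent of $x$'' must mean. The baseline also has to carry no $\params$-dependence, or at least be held fixed (stop-gradient) when differentiating. Otherwise the extra term $-\grad_\params\baseline$ appears. I would state this assumption explicitly; it matches how baselines such as the RLOO mean are treated as constants in the gradient. Interchanging the gradient and the sum is unproblematic for finite or suitably regular action spaces, as was already assumed in the earlier lemmas.
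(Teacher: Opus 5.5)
Your proof is correct, but your primary argument takes a different route from the paper's. The paper argues entirely through the score-function identity. It rewrites $\grad_\params \E_{x\sim P_\params}[f_\params(x)-\baseline]$ as $\E[(f_\params(x)-\baseline)\,\grad_\params\log P_\params(x)]$, splits off $\baseline\cdot\E[\grad_\params\log P_\params(x)]$, and sets that term to zero by Lemma~\ref{lemma:grad-score-fn}; this is essentially your cross-check. Your main argument uses only linearity and normalization, $\E_{x\sim P_\params}[1]=1$, so the baselined objective differs from the original by a constant. That route is shorter, needs neither the log-derivative trick nor differentiability of $\log P_\params$, and shows plainly that the real hypothesis is $\grad_\params\baseline=0$.

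What the paper's route gives you is the estimator-level identity $\E[\baseline\,\grad_\params\log P_\params(x)]=0$. This identity is what justifies subtracting a baseline inside a Monte Carlo REINFORCE estimate. It holds for any $x$-independent $\baseline$ evaluated at the current parameters, even a $\params$-dependent one such as the mean log-probability used for centering in Theorem~\ref{thm:mdp-entropy-exact}, provided it is not differentiated.

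Your caveat is well taken: ``independent of $x$'' alone does not make the lemma as stated true, and the paper's chain also silently needs $\grad_\params\baseline=0$. Its first equality drops the whole pathwise term $\E[\grad_\params(f_\params(x)-\baseline)]$, while its last equality drops only $\E[\grad_\params f_\params(x)]$. The two omissions therefore cancel only when $\baseline$ carries no $\params$-dependence. Your cross-check keeps the pathwise term explicitly, so it is slightly more careful than the paper's written proof.
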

\begin{proof}
\begin{align*}
\grad_\params \E_{x \sim P_\params}\left[f_\params(x)-\baseline\right]
&= \E_{x \sim P_\params}\left[\left(f_\params(x)-\baseline\right)\cdot\grad_\params \log P_\params(x) \right] \\
&= \E_{x \sim P_\params}\left[f_\params(x)\cdot\grad_\params \log P_\params(x) \right] - \E_{x \sim P_\params}\left[\baseline\cdot\grad_\params \log P_\params(x) \right] \\
&= \E_{x \sim P_\params}\left[f_\params(x)\cdot\grad_\params \log P_\params(x) \right] - \baseline\cdot\underbrace{\E_{x \sim P_\params}\left[\grad_\params \log P_\params(x) \right]}_{0} \\
&= \E_{x \sim P_\params}\left[f_\params(x)\cdot\grad_\params \log P_\params(x) \right] \\
&= \grad_\params \E_{x \sim P_\params}\left[f_\params(x)\right]
\end{align*}
\end{proof}

\begin{lemma}
\label{lemma:mdp-gradient}
The gradient of MDP objective $\obj_\mathrm{MDP}$ at some state $\state$ is: $$\grad_\params \obj_\mathrm{MDP}(\state) = \E_{\action\sim\policy(\cdot\mid\state)}\left[(\reward(\state,\action)-\baseline) \cdot \grad_\params \log\policy(\action\mid\state) \right]$$
for any arbitrary baseline $\baseline$ independent of $\action$.
\end{lemma}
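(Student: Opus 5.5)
The plan is to reduce the per-state objective to a single-step expectation and then invoke the score-function identities already established. At a fixed state $\state$, the objective is $\obj_\mathrm{MDP}(\state)=\E_{\action\sim\policy(\cdot\mid\state)}[\reward(\state,\action)]$. Here $\reward(\state,\action)$ is read as the (terminal) return reached after taking $\action$ in $\state$. The gradient is taken only with respect to the action distribution at $\state$, with the continuation treated as fixed. This is the same single-state viewpoint used in \cref{thm:mdp-entropy-exact}.

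The first step applies \cref{lemma:exp-grad} with $P_\params=\policy(\cdot\mid\state)$ and $f_\params(\action)=\reward(\state,\action)$. Because the reward does not depend on $\params$, the term $\grad_\params f_\params$ vanishes. This leaves
\[
\grad_\params \obj_\mathrm{MDP}(\state)=\E_{\action\sim\policy(\cdot\mid\state)}\left[\reward(\state,\action)\cdot\grad_\params\log\policy(\action\mid\state)\right].
\]

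The second step subtracts the baseline. Since $\baseline$ does not depend on $\action$, I would write
\[
\E_{\action}\left[(\reward-\baseline)\score(\state,\action)\right]=\E_{\action}\left[\reward\,\score(\state,\action)\right]-\baseline\,\E_{\action}\left[\score(\state,\action)\right].
\]
The last expectation is zero by \cref{lemma:grad-score-fn}. Equivalently, one can cite \cref{lemma:grad-baseline} directly. This yields the stated formula.

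The main obstacle is interpretive rather than computational: one must make precise what $\obj_\mathrm{MDP}(\state)$ and $\reward(\state,\action)$ mean at an intermediate state. The same parameters $\params$ also govern future actions, so a full derivative would include continuation terms. I would state explicitly that the return-to-go is treated as a $\params$-independent function of $(\state,\action)$, which is the standard per-state policy-gradient decomposition. The baseline may depend on $\state$ and on other samples, for example the other rollouts in RLOO, but it must not depend on $\action$. Under that convention, the result follows immediately from the two lemmas.
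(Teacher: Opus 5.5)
Your proof is correct and follows essentially the same approach as the paper's: both differentiate with the score-function identity (\cref{lemma:exp-grad}, where the reward's own gradient vanishes) and use the zero-mean score to insert $\baseline$. The only difference is order: the paper inserts the baseline first via \cref{lemma:grad-baseline} and then differentiates, while you differentiate first and then subtract $\baseline\,\E[\score(\state,\action)]=0$. Your explicit convention that the return-to-go is $\params$-independent states an assumption the paper uses silently when it sets $\grad_\params(\reward-\baseline)=0$.
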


\begin{proof}
Largely following \citep{williams1992simple}, \cref{lemma:exp-grad}, and \cref{lemma:grad-baseline}
\begin{align*}
\grad_\params \obj_\mathrm{MDP}(\state)
&= \grad_\params  \E_{\action\sim\policy(\cdot\mid\state)}\left[\reward(\state,\action)\right] \\
&= \grad_\params  \E_{\action\sim\policy(\cdot\mid\state)}\left[\left(\reward(\state,\action)-\baseline\right)\right] \\
&= \E_{\action\sim\policy(\cdot\mid\state)}\left[\left(\reward(\state,\action)-\baseline\right) \cdot \grad_\params \log\policy(\action\mid\state)\right] + \underbrace{\E_{\action\sim\policy(\cdot\mid\state)}\left[\grad_\params \left(\reward(\state,\action)-\baseline\right)\right]}_{0} \\
&= \E_{\action\sim\policy(\cdot\mid\state)}\left[\left(\reward(\state,\action)-\baseline\right) \cdot \grad_\params \log\policy(\action\mid\state) \right]
\end{align*}
\end{proof}

\begin{lemma}
\label{lemma:entropy-gradient}
The gradient of the policy entropy at some state $\state$ is: $$\grad_\params \Entropy_{\policy}(\cdot\mid\state) = -\E_{\action\sim\policy(\cdot\mid\state)}\left[ (\log\policy(\action\mid\state)-\baseline) \cdot \grad_\params \log\policy(\action\mid\state) \right] $$
for any arbitrary baseline $\baseline$ independent of $\action$.
\end{lemma}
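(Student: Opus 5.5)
The plan is to write the entropy as an expectation of a $\params$-dependent function under a $\params$-dependent distribution, differentiate with \cref{lemma:exp-grad}, simplify with \cref{lemma:grad-score-fn}, and then insert a baseline with the argument of \cref{lemma:grad-baseline}.

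First I write $\Entropy_{\policy}(\cdot\mid\state) = \E_{\action\sim\policy(\cdot\mid\state)}[f_\params(\action)]$ with $f_\params(\action) = -\log\policy(\action\mid\state)$. Applying \cref{lemma:exp-grad} with $P_\params = \policy(\cdot\mid\state)$ gives
\begin{align*}
\grad_\params \Entropy_{\policy}(\cdot\mid\state) = \E_{\action\sim\policy(\cdot\mid\state)}\left[-\grad_\params\log\policy(\action\mid\state) - \log\policy(\action\mid\state)\cdot\grad_\params\log\policy(\action\mid\state)\right].
\end{align*}
The first term is $-\E_{\action\sim\policy(\cdot\mid\state)}[\score(\state,\action)]$, which vanishes by \cref{lemma:grad-score-fn}. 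This leaves $-\E_{\action\sim\policy(\cdot\mid\state)}[\log\policy(\action\mid\state)\cdot\score(\state,\action)]$, the claim with $\baseline = 0$.

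For an arbitrary baseline $\baseline$ independent of $\action$, I subtract $\baseline\cdot\E_{\action\sim\policy(\cdot\mid\state)}[\score(\state,\action)]$, which is zero by \cref{lemma:grad-score-fn} again. Pulling the constant $\baseline$ inside the expectation yields the stated form. One natural choice is $\baseline = \E_{\action\sim\policy(\cdot\mid\state)}[\log\policy(\action\mid\state)]$, which gives the centered form $-\E[\clogp(\state,\action)\cdot\score(\state,\action)]$ used later in the paper.

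The only delicate point is that $\baseline$ may depend on $\params$, as in the centered choice. Because the baseline is inserted after differentiation, it is simply a constant with respect to $\action$ at the current $\params$, and the identity $\E[\baseline\cdot\score(\state,\action)] = 0$ holds without requiring $\baseline$ itself to be $\params$-independent. So I add the baseline after taking the gradient, rather than invoking \cref{lemma:grad-baseline} on the objective directly.
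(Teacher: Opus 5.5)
Your proof is correct, but it is organized differently from the paper's. The paper proves the lemma in one line: it treats the entropy as the MDP objective with reward $r(s,a)=-\log\pi_\theta(a\mid s)$ and invokes \cref{lemma:mdp-gradient}. That lemma was set up for a parameter-independent reward. When it is reused here, the term $\mathbb{E}_{a\sim\pi_\theta}[\nabla_\theta(r(s,a)-b)]$, which its proof marks as zero, is no longer trivially zero. For fixed $b$ it vanishes only because it equals $-\mathbb{E}_{a\sim\pi_\theta}[\nabla_\theta\log\pi_\theta(a\mid s)]$, which is zero by \cref{lemma:grad-score-fn}. You differentiate directly with \cref{lemma:exp-grad} and make exactly this cancellation explicit, and that cancellation is the actual content of the lemma.

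Your handling of the baseline is also slightly more general. The paper's chain inserts $b$ before differentiating (via \cref{lemma:grad-baseline}), and the identity $\nabla_\theta\mathbb{E}[f-b]=\nabla_\theta\mathbb{E}[f]$ literally requires $\nabla_\theta b=0$. You instead add $b\,\mathbb{E}_{a\sim\pi_\theta}[\nabla_\theta\log\pi_\theta(a\mid s)]=0$ after differentiating. This legitimately covers $\theta$-dependent baselines such as $\mathbb{E}_{a\sim\pi_\theta}[\log\pi_\theta(a\mid s)]$, which is exactly the centered choice used for $h(s)$ in the proof of \cref{thm:mdp-entropy-exact}.

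What the paper's reduction buys in exchange is conceptual economy. It presents the entropy gradient as an ordinary policy gradient with ``reward'' $-\log\pi_\theta$, and that viewpoint is what later makes REPO-D equivalent to an entropy bonus.
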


\begin{proof}
Follows directly from \cref{lemma:mdp-gradient} with $\reward(\state,\action)=-\log\policy(\action\mid\state)$.
\begin{align*}
\grad_\params \Entropy_{\policy}(\cdot\mid\state) 
&= - \grad_\params  \E_{\action\sim\policy(\cdot\mid\state)}\left[\log\policy(\action\mid\state)\right] \\
&= -\E_{\action\sim\policy(\cdot\mid\state)}\left[ (\log\policy(\action\mid\state)-\baseline) \cdot \grad_\params \log\policy(\action\mid\state) \right]
\end{align*}
\end{proof}

\begin{lemma}
\label{lemma:exp-adv}
The expected advantage function $\adv(\state,\action)\defeq\reward(\state,\action)-\baseline$, with baseline $\val(\state)\defeq \E_{\action\sim\policy(\cdot\mid\state)}[\reward(\state,\action)]$, at some state $\state$ is:
$$\E_{\action\sim\policy(\cdot\mid\state)}[\adv(\state,\action)]=0$$
\end{lemma}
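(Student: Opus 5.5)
This follows directly from linearity of expectation and the definition of the baseline. I will write $\adv(\state,\action)=\reward(\state,\action)-\val(\state)$ and take expectations over $\action\sim\policy(\cdot\mid\state)$. Since $\val(\state)$ depends only on the state and not on $\action$, it is a constant with respect to this expectation.

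The steps are as follows.
\begin{align*}
\E_{\action\sim\policy(\cdot\mid\state)}[\adv(\state,\action)]
&= \E_{\action\sim\policy(\cdot\mid\state)}[\reward(\state,\action)] - \E_{\action\sim\policy(\cdot\mid\state)}[\val(\state)] \\
&= \val(\state) - \val(\state)\sum_\action \policy(\action\mid\state) \\
&= \val(\state) - \val(\state) = 0 .
\end{align*}
The first line uses linearity of expectation. The second line substitutes the definition of $\val(\state)$ for the first term, and pulls the constant out of the second term.

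There is essentially no obstacle. The only points to check are that $\val(\state)$ is finite, which holds for bounded terminal rewards, and that $\val(\state)$ is indeed independent of the sampled action. In this setting $\val(\state)$ is a deterministic function of $\state$ and $\policy$.

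For the empirical estimators such as $\widehat{\adv}_{\mathrm{RLOO}}$, the analogous statement holds in expectation. The leave-one-out baseline is independent of $\traj_i$ and has mean $\val$. This is not required by the lemma, which concerns the exact baseline $\val(\state)$.
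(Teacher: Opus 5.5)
Your proof is correct and is the same as the paper's: linearity of expectation separates the action-independent baseline, and the expected reward equals that baseline by definition, so the difference is zero. The side remarks on finiteness and on the leave-one-out estimator are fine but not needed for the lemma.
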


\begin{proof}
\begin{align*}
\E_{\action\sim\policy(\cdot\mid\state)}[\adv(\state,\action)]
&= \E_{\action\sim\policy(\cdot\mid\state)}[\reward(\state,\action)-\val(\state)] \\
&= \E_{\action\sim\policy(\cdot\mid\state)}[\reward(\state,\action)]-\val(\state) \\
&= \val(\state)-\val(\state) \\
&= 0
\end{align*}
\end{proof}

\subsection{Entropy dynamics under policy gradient}

\MDPEntropy*

\begin{proof}
\label{proof:mdp-entropy-exact}
Let $\clogp(\state,\action)\defeq\log\policy(\action\mid\state)-\E_{\action\sim\policy(\cdot\mid\state)}[\log\policy(\action\mid\state)]$ denote mean-centered log-probabilities and let $\score(\state,\action)\defeq\grad_\params\log\policy(\action\mid\state)$ denote the score function of policy $\policy$ evaluated at action $\action$ and state $\state$.
Let $g(\state)$ and $h(\state)$ denote the respective mean-baselined policy gradient and entropy gradient evaluated on-policy in some state $\state$:
\begin{align*}
g(\state) &= \grad_\params \obj_{\mathrm{MDP}}(\state) = \E_{\action \sim \policy(\cdot\mid\state)} \left[\adv(\state,\action)\cdot\score(\state,\action)\right] \\
h(\state) &= \grad_\params \Entropy_{\policy}(\cdot\mid\state) = -\E_{\action\sim\policy(\cdot\mid\state)}\left[ \clogp(\state,\action) \cdot \score(\state,\action) \right]
\end{align*}
Here, each estimator allows for an arbitrary baseline that cancels through the parameter gradient $\grad_\params$.
While the baseline does not influence the exact mathematical construction, it does influence approximations to the change in entropy.
Here we chose mean baselines to center the policy, minimize variance in each gradient estimator, and to agree with a tabular softmax approximation of the change in entropy (see~\cref{thm:sm-mdp}).

Using the first-order Taylor approximation: $\Entropy_{\policy}(\cdot\mid\state \ ;\  \params + \lr \cdot g) \approx \Entropy_{\policy}(\cdot\mid\state \ ;\  \params) + \lr \cdot g^\top h$, for small learning rate $\lr$, the expected change in entropy from a policy gradient update in state $\state$ is:
\begin{align*}
\Delta \Entropy_{\policy}(\cdot\mid\state) &\approx \lr \cdot g(\state)^\top h(\state) \\
&= - \lr \cdot \left(\E_{\action \sim \policy(\cdot\mid\state)} \left[\adv(\state,\action)\cdot\score(\state,\action)\right]\right)^\top \left(\E_{\action^\prime\sim\policy(\cdot\mid\state)}\left[ \clogp(\state,\action^\prime) \cdot \score(\state,\action^\prime) \right]\right) \\
&= - \lr \cdot \E_{\action \sim \policy(\cdot\mid\state),\action^\prime\sim\policy(\cdot\mid\state)}\left[\adv(\state,\action)\cdot\clogp(\state,\action^\prime)\cdot\score(\state,\action)^\top \score(\state,\action^\prime)\right]
\end{align*}
\end{proof}

\subsection{Approximate entropy dynamics under policy gradient}
\MDPEntropyApprox*

\begin{proof}
\label{proof:mdp-entropy-cov}
Assuming the score vectors satisfy orthogonality of the off-diagonal terms such that $\score(\state,\action)^\top \score(\state,\action^\prime)=0$ for $\action \neq \action^\prime$, the double expectation can be collapsed, yielding:
\begin{align*}
\Delta \Entropy_{\policy}(\cdot\mid\state)
\approx - \lr \cdot \E_{\action \sim \policy(\cdot\mid\state)}\left[\policy(\action\mid\state)\cdot\adv(\state,\action)\cdot\clogp(\state,\action)\cdot\|\score(\state,\action)\|^2\right]
\end{align*}

Assuming independence of the squared gradient norm magnitude, such that it can be treated as a constant with respect to the expectation,
\begin{align*}
\Delta \Entropy_{\policy}(\cdot\mid\state) &\propto -\E_{\action\sim\policy(\cdot\mid\state)}[\adv(\state,\action) \cdot \clogp(\state,\action) \cdot \policy(\action\mid\state) ]
\end{align*}
\end{proof}

\subsection{Entropy dynamics under policy gradient for tabular softmax policies}
\label{proof:mdp-entropy-pg}

\begin{proposition}
\label{prop:exp-dot-prod}
For two functions $f(x)$ and $g(x)$ over samples $x \sim \spolicy$ of a softmax distribution $\spolicy(x) = \exp(\s_x) / \sum_k \exp(\s_k)$, the dot product of expected gradients is:
$$
\Big< \E_{x \sim \spolicy}[f(x)\cdot \grad_\s \log \spolicy(x)]\quad\boldsymbol,\quad \E_{y \sim \spolicy}[g(y)\cdot \grad_\s \log \spolicy(y) ]\Big> = \E_{x \sim \spolicy}[\spolicy(x)\cdot (f(x) - \bar{f})\cdot(g(x) - \bar{g})],
$$
where $\bar{f} = \E_{x \sim \spolicy}[f(x)]$ and $\bar{g} = \E_{x \sim \spolicy}[g(x)]$.
\end{proposition}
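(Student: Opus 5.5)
The plan is a direct computation in the logit coordinates $\s$. First, compute the score of the softmax explicitly: $\partial \log\spolicy(x)/\partial \s_k = \indicator{[x=k]} - \spolicy(k)$, so $\grad_\s \log\spolicy(x) = e_x - \spolicy$, where $e_x$ is the standard basis vector and $\spolicy$ is identified with its probability vector. Then each expected gradient becomes an explicit vector. Its $k$-th coordinate is
\begin{align*}
\E_{x\sim\spolicy}[f(x)(\indicator{[x=k]}-\spolicy(k))] = \spolicy(k) f(k) - \spolicy(k)\bar f = \spolicy(k)\,(f(k)-\bar f).
\end{align*}
The analogous formula holds for $g$. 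This step also shows why centering appears for free. By \cref{lemma:grad-score-fn}, $\E[\grad_\s\log\spolicy]=0$, so replacing $f$ by $f-\bar f$ does not change the expected gradient. This is the baseline invariance of \cref{lemma:grad-baseline}.

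Second, take the Euclidean inner product of the two vectors:
\begin{align*}
\sum_k \spolicy(k)(f(k)-\bar f)\cdot\spolicy(k)(g(k)-\bar g) = \sum_k \spolicy(k)\,\big[\spolicy(k)(f(k)-\bar f)(g(k)-\bar g)\big].
\end{align*}
We then rewrite the outer sum, weighted by $\spolicy(k)$, as an expectation over $x\sim\spolicy$. That gives exactly $\E_{x\sim\spolicy}[\spolicy(x)(f(x)-\bar f)(g(x)-\bar g)]$, as claimed.

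There is no real obstacle. The only points needing care are the following.
\begin{itemize}
\item Use the coordinate form of the score, $e_x-\spolicy$, rather than working abstractly.
\item Note that $f,g$ are treated as fixed functions, not differentiated. The gradient acts only on $\log\spolicy$.
\end{itemize}
Alternatively, one could expand the double expectation $\E_{x,y}[f(x)g(y)(e_x-\spolicy)^\top(e_y-\spolicy)]$ and use $(e_x-\spolicy)^\top(e_y-\spolicy)=\indicator{[x=y]}-\spolicy(x)-\spolicy(y)+\|\spolicy\|^2$. That route needs more bookkeeping. The coordinatewise route above is the one I would write up.
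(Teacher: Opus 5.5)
Your proof is correct, but it is organized differently from the paper's. You evaluate each expected gradient in closed form first, $\E_{x\sim\spolicy}[f(x)\,\grad_\s\log\spolicy(x)]_k=\spolicy(k)(f(k)-\bar f)$. The inner product is then the single sum $\sum_k\spolicy(k)^2(f(k)-\bar f)(g(k)-\bar g)$.

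The paper takes exactly the route you set aside. It first computes the score kernel $\grad_\s\log\spolicy(x)^\top\grad_\s\log\spolicy(y)=\indicator{[x=y]}-\spolicy(x)-\spolicy(y)+\E_{z\sim\spolicy}[\spolicy(z)]$. It then integrates this against $f(x)g(y)$ over independent $x,y\sim\spolicy$ term by term, and recombines the four resulting terms into the centered form.

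Your version is shorter and makes the centering visibly a consequence of baseline invariance. It also produces the expected gradient vectors themselves as a by-product. The paper's version instead mirrors the double expectation in \cref{thm:mdp-entropy-exact}. It shows what the score inner products $\score(\state,\action)^\top\score(\state,\action^\prime)$ look like for a tabular softmax. They are not orthogonal for $\action\neq\action^\prime$, which \cref{thm:mdp-entropy-approx} assumes, and $\|\score\|^2$ is not constant. Even so, every part of the kernel other than $\indicator{[x=y]}$ only centers $f$ and $g$. This is why the softmax result still has the same form as the corollary.
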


\begin{proof}
First, let's compute $\grad_\s \log \spolicy(x)$ for the softmax distribution:
\begin{align*}
\log \spolicy(x) &= \log \frac{\exp(\s_x)}{\sum_k \exp(\s_k)} = \s_x - \log \sum_k \exp(\s_k)\\
\grad_{\s_z} \log \spolicy(x) &= \indicator{x=z} - \frac{\exp(S_z)}{\sum_k \exp(\s_k)} = \indicator{x=z} - \spolicy(z)
\end{align*}
where $\indicator{x=y}$ is the indicator function (1 if $x=y$, 0 otherwise).

Now let's compute the dot product $\grad_\s \log \spolicy(x)^{\top} \grad_\s \log \spolicy(y)$:
\begin{align*}
\grad_\s \log \spolicy(x)^{\top} \grad_\s \log \spolicy(y) &= \sum_j (\indicator{x=j} - \spolicy(j))(\indicator{y=j} - \spolicy(j))\\
&= \sum_j (\indicator{x=j}\cdot\indicator{y=j} - \indicator{x=j}\spolicy(j) - \spolicy(j)\cdot\indicator{y=j} + \spolicy(j)^2)\\
&= \indicator{x=y} - \spolicy(x) - \spolicy(y) + \E_{z \sim \spolicy}[\spolicy(z)]
\end{align*}

Now we can compute the dot product of expected gradients:
\begin{align*}
&\Big< \E_{x \sim \spolicy}[f(x)\cdot \grad_\s \log \spolicy(x)], \E_{y \sim \spolicy}[g(y)\cdot \grad_\s \log \spolicy(y)] \Big>\\
&= \E_{x \sim \spolicy, y \sim \spolicy}[f(x)\cdot g(y)\cdot \grad_\s \log \spolicy(x)^{\top} \grad_\s \log \spolicy(y)]\\
&= \E_{x \sim \spolicy, y \sim \spolicy}[f(x)\cdot g(y)\cdot (\indicator{x=y} - \spolicy(x) - \spolicy(y) + \E_{z \sim \spolicy}[\spolicy(z)])]
\end{align*}

Let's compute each term separately:
\begin{align*}
\E_{x \sim \spolicy, y \sim \spolicy}[f(x)\cdot g(y)\cdot \indicator{x=y}] &= \E_{x \sim \spolicy}[\spolicy(x)\cdot f(x)\cdot g(x)]\\
\E_{x \sim \spolicy, y \sim \spolicy}[f(x) \cdot g(y)\cdot \spolicy(x)] &= \E_{x \sim \spolicy}[f(x)\cdot \spolicy(x)]\cdot \E_{y \sim \spolicy}[g(y)] = \E_{x \sim \spolicy}[\spolicy(x)\cdot f(x)] \cdot \bar{g}\\
\E_{x \sim \spolicy, y \sim \spolicy}[f(x)\cdot g(y)\cdot \spolicy(y)] &= \E_{x \sim \spolicy}[f(x)]\cdot \E_{y \sim \spolicy}[g(y)\cdot \spolicy(y)] = \bar{f} \E_{y \sim \spolicy}[\spolicy(y)\cdot g(y)]\\
\E_{x \sim \spolicy, y \sim \spolicy}[f(x)\cdot g(y)\cdot \E_{z \sim \spolicy}[\spolicy(z)]] &= \E_{z \sim \spolicy}[\spolicy(z)]\cdot \E_{x \sim \spolicy}[f(x)]\cdot \E_{y \sim \spolicy}[g(y)] = \E_{x \sim \spolicy}[\spolicy(x)]\cdot \bar{f}\cdot \bar{g}
\end{align*}

Therefore:
\begin{align*}
&\Big< \E_{x \sim \spolicy}[f(x)\cdot \grad_\s \log \spolicy(x)], \E_{y \sim \spolicy}[g(y)\cdot \grad_\s \log \spolicy(y)] \Big>\\
&= \E_{x \sim \spolicy}[\spolicy(x)\cdot f(x)\cdot g(x)] - \E_{x \sim \spolicy}[\spolicy(x)\cdot f(x)]\cdot \bar{g} - \bar{f}\cdot \E_{x \sim \spolicy}[\spolicy(x)\cdot g(x)] + \E_{x \sim \spolicy}[\spolicy(x)]\cdot \bar{f}\cdot \bar{g}\\
&= \E_{x \sim \spolicy}[\spolicy(x)\cdot (f(x)\cdot g(x) - f(x)\cdot \bar{g} - \bar{f}\cdot g(x) + \bar{f}\cdot \bar{g})]\\
&= \E_{x \sim \spolicy}[\spolicy(x)\cdot (f(x) - \bar{f})\cdot(g(x) - \bar{g})]
\end{align*}
where $\bar{f} = \E_{x \sim \spolicy}[f(x)]$ and $\bar{g} = \E_{x \sim \spolicy}[g(x)]$.
\end{proof}

The above proposition holds for simple softmax policies, but involves a much more complex gradient term and inner product for generic transformer-based policies.

\begin{corollary}
\label{thm:sm-mdp}
Under a tabular softmax policy, a policy gradient update $\hat \params := \params + \lr\cdot \grad_\params \obj_\mathrm{MDP}$ changes the entropy approximately:
$$
\Delta \Entropy_{\policy}(\cdot\mid\state) \approx - \lr \cdot \E_{\action\sim\spolicy(\cdot\mid\state)}\left[\spolicy(\action\mid\state)\cdot\left(\log\spolicy(\action\mid\state) - \overline{\log\spolicy(\cdot\mid\state)}\right)\cdot\left(\reward(\state,\action) - \overline{\reward(\state)}\right)\right]
$$
where $\overline{\log\spolicy(\cdot\mid\state)} = \E_{\action\sim\policy(\cdot|\state)}\left[\log\spolicy(\action\mid\state)\right]$ and $\overline{\reward(\state)}=\E_{\action\sim\policy(\cdot|\state)}\left[\reward(\state,\action)\right]$.
\end{corollary}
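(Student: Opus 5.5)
The plan is to combine the first-order expansion from the proof of \cref{thm:mdp-entropy-exact} with \cref{prop:exp-dot-prod}. We take the parameters at state $\state$ to be the logits $\s$ of a tabular softmax, so $\params$ restricted to this state is $\s$ and $\policy(\cdot\mid\state)=\spolicy(\cdot\mid\state)$. As in that proof, the first-order Taylor approximation gives $\Delta\Entropy_{\policy}(\cdot\mid\state)\approx \lr\cdot g(\state)^\top h(\state)$. Here $g(\state)=\grad_\s\obj_\mathrm{MDP}(\state)$ and $h(\state)=\grad_\s\Entropy_{\policy}(\cdot\mid\state)$.

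\textbf{Step 1: score-function forms of the two gradients.} By \cref{lemma:mdp-gradient}, $g(\state)=\E_{\action\sim\spolicy}[\reward(\state,\action)\cdot\grad_\s\log\spolicy(\action\mid\state)]$; we may drop the baseline, since any baseline cancels by \cref{lemma:grad-score-fn}. By \cref{lemma:entropy-gradient} with zero baseline, $h(\state)=\E_{\action\sim\spolicy}[(-\log\spolicy(\action\mid\state))\cdot\grad_\s\log\spolicy(\action\mid\state)]$. The $\grad_\s f$ term that would arise from \cref{lemma:exp-grad} is already accounted for inside \cref{lemma:entropy-gradient}, because $\E[\grad\log\spolicy]=0$.

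\textbf{Step 2: apply \cref{prop:exp-dot-prod}.} Take $f(\action)=\reward(\state,\action)$ and $g(\action)=-\log\spolicy(\action\mid\state)$. The proposition gives
\begin{align*}
g(\state)^\top h(\state)=\E_{\action\sim\spolicy}\left[\spolicy(\action\mid\state)\cdot\left(\reward(\state,\action)-\overline{\reward(\state)}\right)\cdot\left(-\log\spolicy(\action\mid\state)+\overline{\log\spolicy(\cdot\mid\state)}\right)\right].
\end{align*}
Pulling out the minus sign and multiplying by $\lr$ yields the claimed expression.

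\textbf{Main subtlety.} Inside \cref{prop:exp-dot-prod}, $g$ depends on $\spolicy$, but the proposition only treats $f$ and $g$ as fixed functions evaluated at the current point. The derivative of $g$ itself has already been handled in Step 1 via \cref{lemma:entropy-gradient}, so the identity applies pointwise at the current logits.

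The other point to check is that the tabular parameterization makes other states' parameters irrelevant to this update. Under the tabular assumption, the logits for $\state$ are a separate block of $\params$, so the inner product restricts to the $\s$-block.

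Finally, note that the result is exact up to the first-order Taylor step. There is no need for the orthogonality or constant-norm assumptions of \cref{thm:mdp-entropy-approx}. In effect, the softmax score inner product $\indicator{x=y}-\spolicy(x)-\spolicy(y)+\E[\spolicy]$ plays the role of those assumptions, and the mean-centering absorbs the non-diagonal terms.
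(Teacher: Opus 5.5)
Your proposal is correct and follows essentially the same route as the paper: you write $\Delta \Entropy_{\policy}(\cdot\mid\state)\approx\lr\cdot g(\state)^\top h(\state)$ with both gradients in score-function form, then apply \cref{prop:exp-dot-prod} with $f=\reward(\state,\cdot)$ and $g=-\log\spolicy(\cdot\mid\state)$, and the softmax score inner product supplies the centering of both factors. Your extra remarks are correct refinements: the tabular block structure, and the observation that the non-indicator part of the softmax score inner product is removed by centering rather than by the orthogonality assumption of \cref{thm:mdp-entropy-approx}. Do rename one of the two objects you call $g$ (the gradient and the test function) to avoid the clash.
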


\begin{proof}

Let $g(\state)$ and $h(\state)$ denote the respective policy gradient and entropy gradient evaluated on-policy in some state $\state$:
\begin{align*}
g(\state) &= \grad_\params \obj_{\mathrm{MDP}}(\state) = \E_{\action \sim \policy(\cdot\mid\state)} \left[\reward(\state,\action)\cdot\grad_{\params}\log\policy(\action\mid\state)\right] \\
h(\state) &= \grad_\params \Entropy_{\policy}(\cdot\mid\state) = -\E_{\action\sim\policy(\cdot\mid\state)}\left[ \log\policy(\action\mid\state) \cdot \grad_\params \log\policy(\action\mid\state) \right]
\end{align*}

Using the first-order Taylor approximation: $\Entropy_{\policy}(\cdot\mid\state \ ;\  \params + \lr \cdot g) \approx \Entropy_{\policy}(\cdot\mid\state \ ;\  \params) + \lr \cdot g^\top h$, for small learning rate $\lr$, the expected change in entropy from a policy gradient update in state $\state$ is:
\begin{align*}
\nonumber \Delta \Entropy_{\policy}(\cdot\mid\state) &\approx \lr \cdot g(\state)^\top h(\state) \\ \nonumber 
&= - \lr \cdot \E_{\action\sim\spolicy(\cdot\mid\state)}\left[\spolicy(\action\mid\state)\cdot\left(\log\spolicy(\action\mid\state) - \overline{\log\spolicy(\cdot\mid\state)}\right)\cdot\left(\reward(\state,\action) - \overline{\reward(\state)}\right)\right]\\
\end{align*}
The second line follows \cref{prop:exp-dot-prod}.
Note that gradient interactions through the softmax automatically center the reward function, i.e., $\adv(\state,\action)=\reward(\state,\action)-\val(\state)=\reward(\state,\action)-\overline{\reward(\state)}$.
The log-probabilities, too, are centered as here they reflect $\reward(\state,\action)=-\log\spolicy(\action\mid\state)$.
This yields a form equivalent to \cref{thm:mdp-entropy-approx}.
\end{proof}

\subsection{Entropy dynamics under clipped PPO}

\begin{proposition}
\label{thm:entropy-ratio}
Given two distributions $\pi(x)$ and $\phi(x)$ with constraint $\frac{\pi(x)}{\phi(x)} \le 1 + \eps$ for all $x$, their relative entropy is bound by
$$
\Entropy(\pi) \le (1+\eps)\cdot\Entropy(\phi)
$$
\end{proposition}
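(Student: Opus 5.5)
The argument compares $\Entropy(\pi)$ with the cross-entropy of $\pi$ relative to $\phi$, and then uses the pointwise ratio constraint on the nonnegative surprisal terms $-\log\phi(x)$. The constraint $\pi(x)\le(1+\eps)\phi(x)$ cannot be applied directly inside $-\pi\log\pi$, because $t\mapsto -t\log t$ is not monotone. The plan is to first replace $\log\pi$ by $\log\phi$, which makes every summand a nonnegative weight times a nonnegative function. The ratio bound can then be applied termwise.

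\textbf{Step 1 (Gibbs' inequality).} Since $\KL(\pi\,\|\,\phi)\ge 0$, we have
\begin{align*}
\Entropy(\pi) = -\sum_x \pi(x)\log\pi(x) \le -\sum_x \pi(x)\log\phi(x).
\end{align*}
The constraint forces $\phi(x)>0$ wherever $\pi(x)>0$, so the right-hand side is well defined (it is finite over finite supports). If $\phi(x)=0$, then also $\pi(x)=0$, and the term vanishes under the convention $0\log 0=0$.

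\textbf{Step 2 (termwise ratio bound).} For every $x$ in the support of $\phi$, we have $\phi(x)\le 1$, so $-\log\phi(x)\ge 0$. Multiplying the constraint $\pi(x)\le(1+\eps)\phi(x)$ by this nonnegative quantity and summing over $x$ gives
\begin{align*}
-\sum_x \pi(x)\log\phi(x) \le (1+\eps)\Big(-\sum_x \phi(x)\log\phi(x)\Big) = (1+\eps)\,\Entropy(\phi).
\end{align*}
Chaining this with Step 1 yields $\Entropy(\pi)\le(1+\eps)\Entropy(\phi)$.

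The only delicate point is choosing the intermediate quantity. If one instead tries to bound $-\pi\log\pi$ pointwise, the inequality goes the wrong way wherever $\pi(x)$ is small. Gibbs' inequality sidesteps this, after which the proof is routine. I would also check the measure-zero support cases, and note that for countable alphabets the same argument goes through by monotone convergence, since all terms are nonnegative.

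For the lower bound needed in \cref{thm:clip-ppo}, I would apply the same argument with the roles of $\pi$ and $\phi$ exchanged, using $\phi(x)\le\pi(x)/(1-\eps_\text{low})$.
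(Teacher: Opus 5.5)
Your proposal is correct and is essentially the paper's proof. The paper also drops $\KL(\pi\|\phi)\ge 0$ to pass to the cross-entropy $-\E_{x\sim\pi}[\log\phi(x)]$, rewrites it as $\E_{x\sim\phi}[\beta_x\cdot(-\log\phi(x))]$ with $\beta_x=\pi(x)/\phi(x)\le 1+\eps$, and bounds termwise using $-\log\phi(x)\ge 0$. Your role-swap for the lower bound matches the paper's proof of \cref{thm:clip-ppo}.
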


\begin{proof}
Let's parametrize $\pi(x) = \beta_x\phi(x)$ with $\beta_x \ge 0$ and compute its probability
\begin{align*}
\Entropy(\pi) &= -\E_{x\sim\pi}\left[\log \pi(x)\right]\\
&= -\E_{x\sim\pi}\left[\log \phi(x)\right] - \E_{x\sim\pi}\left[\log \beta_x\right]\\
&= -\E_{x\sim\pi}\left[\log \phi(x)\right] - \underbrace{\E_{x\sim\pi}\left[\log \frac{\pi(x)}{\phi(x)}\right]}_{\KL(\pi\|\phi) \ge 0}\\
&\le -\E_{x\sim\pi}\left[\log \phi(x)\right]\\
&= -\E_{x\sim\phi}\left[\frac{\pi(x)}{\phi(x)}\log \phi(x)\right]\\
&= \E_{x\sim\phi}\left[\beta_x \cdot -\log \phi(x)\right]\\
&\le \E_{x\sim\phi}\left[(1+\eps) \cdot -\log \phi(x)\right]\\
&= (1+\eps)\cdot\Entropy(\phi)\\
\end{align*}
The second-last line uses $-\log \phi(x) \ge 0$ and $\beta_x \le (1+\eps)$ by definition, hence $\beta_x \cdot -\log \phi(x) \le (1+\eps) \cdot -\log \phi(x)$.
\end{proof}

\PPOEntropyClip*

\begin{proof}
\label{proof:clip-ppo}

Applying \cref{thm:entropy-ratio} to $\frac{\policy^{\text{new}}}{\policy^{\text{old}}} \le 1 + \eps_\text{high}$ yields the upper bound 
$$
\Entropy_{\policy^{\text{new}}}(\cdot\mid\state) \leq (1+\eps_\text{high})\cdot\Entropy_{\policy^{\text{old}}}(\cdot\mid\state).
$$

Applying \cref{thm:entropy-ratio} to $1 - \eps_\text{low} \le \frac{\policy^{\text{new}}}{\policy^{\text{old}}}$ (equivalently $\frac{\policy^{\text{old}}}{\policy^{\text{new}}} \le \frac{1}{1 - \eps_\text{low}}$) yields the lower bound 
$$
\Entropy_{\policy^{\text{old}}}(\cdot\mid\state) \leq  \frac{1}{1 - \eps_\text{low}}\cdot\Entropy_{\policy^{\text{new}}}(\cdot\mid\state)
$$
or equivalently
$$
(1 - \eps_\text{low})\cdot\Entropy_{\policy^{\text{old}}}(\cdot\mid\state) \leq \Entropy_{\policy^{\text{new}}}(\cdot\mid\state).
$$
\end{proof}

\subsection{Entropy change under $\adv_{\mathrm{REPO}}$ advantage function}

\begin{restatable}[]{proposition}{REPOEntropyDelta}
\label{cor:repo-entropy-delta}
For advantage
$\adv_{\mathrm{REPO}}(\state,\action)\defeq\adv(\state,\action)-\beta_{\state}\cdot\clogp(\state,\action),
$
the first–order change in entropy induced by a policy–gradient step is:
\begin{align*}
\Delta \Entropy_{\policy}^\mathrm{REPO}(\cdot\mid\state)
&\approx \Delta \Entropy_{\policy}(\cdot\mid\state) + \beta_{\state}\cdot \lr\cdot \left\|\E_{\action \sim \policy(\cdot\mid\state)}\left[\clogp(\state,\action)\cdot\score(\state,\action)\right]\right\|^2.
\end{align*}
\end{restatable}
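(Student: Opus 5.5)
The plan is to repeat the first-order Taylor argument from the proof of \cref{thm:mdp-entropy-exact}, with the REPO advantage in place of $\adv$, and then split the resulting inner product using linearity.

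\textbf{Step 1: the REPO update direction.} The REPO policy-gradient direction at state $\state$ is
\begin{align*}
g_{\mathrm{REPO}}(\state) = \E_{\action\sim\policy(\cdot\mid\state)}\left[\adv_{\mathrm{REPO}}(\state,\action)\cdot\score(\state,\action)\right] = g(\state) - \beta_{\state}\cdot\E_{\action\sim\policy(\cdot\mid\state)}\left[\clogp(\state,\action)\cdot\score(\state,\action)\right].
\end{align*}
Here $g(\state)$ is the mean-baselined policy gradient from the proof of \cref{thm:mdp-entropy-exact}. The second expectation is exactly $-h(\state)$, where $h(\state)=\grad_\params\Entropy_{\policy}(\cdot\mid\state)$ is given by \cref{lemma:entropy-gradient} with the mean baseline. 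Hence
\begin{align*}
g_{\mathrm{REPO}}(\state)=g(\state)+\beta_{\state}\, h(\state).
\end{align*}
One point needs care: $\beta_{\state}$ depends only on the state and not on $\action$, and it is treated as a constant, i.e.\ not differentiated. Likewise $\clogp$ enters the advantage as a fixed (stop-gradient) weight, just as $\adv$ does.

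\textbf{Step 2: the Taylor expansion.} Apply the same first-order expansion as before:
\begin{align*}
\Delta\Entropy^{\mathrm{REPO}}_{\policy}(\cdot\mid\state)\approx \lr\, g_{\mathrm{REPO}}(\state)^\top h(\state) = \lr\, g(\state)^\top h(\state) + \beta_{\state}\,\lr\,\|h(\state)\|^2 .
\end{align*}

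\textbf{Step 3: identify the two terms.} The first term is $\Delta\Entropy_{\policy}(\cdot\mid\state)$, by \cref{thm:mdp-entropy-exact}. For the second term, $\|h(\state)\|^2=\left\|\E_{\action\sim\policy(\cdot\mid\state)}[\clogp(\state,\action)\,\score(\state,\action)]\right\|^2$, because the minus sign disappears under the squared norm. This gives the claimed formula, and the second term is clearly nonnegative.

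\textbf{Main obstacle.} The only real subtlety is justifying that $\beta_{\state}$ and $\clogp$ act as fixed weights rather than differentiated quantities. I would state this explicitly as the standard convention that advantages are stop-gradient. I would also point out that the mean-centering in $\clogp$ is immaterial to the exact value of $h$, by \cref{lemma:grad-baseline}, but it matches the form used in \cref{thm:mdp-entropy-exact}.
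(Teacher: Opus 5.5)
Your proposal is correct and matches the paper's proof essentially step for step: the paper also writes the REPO gradient as $g$ minus $\beta$ times the centered-log-probability score expectation, identifies that expectation as $-h$, and splits $\alpha\, g_{\mathrm{REPO}}^\top h$ into $\alpha\, g^\top h + \beta\alpha\, h^\top h$. Your explicit remark that $\beta$ and the centered log-probabilities act as stop-gradient weights is a useful clarification that the paper leaves implicit.
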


\begin{proof}
Let $g(\state)=\E_{\action \sim \policy(\cdot\mid\state)} \left[\adv(\state,\action)\cdot\score(\state,\action)\right]$ and $h(\state)=-\E_{\action\sim\policy(\cdot\mid\state)}\left[ \clogp(\state,\action) \cdot \score(\state,\action) \right]$ denote the respective policy gradient and entropy gradient evaluated on-policy in some state $\state$.

Using $\adv_{\mathrm{REPO}}$, the policy gradient becomes:
\begin{align*}
g_{\mathrm{REPO}}(\state) &= \E_{\action \sim \policy(\cdot\mid\state)} \left[(\adv(\state,\action)-\beta_{\state}\clogp(\state,\action))\cdot\score(\state,\action)\right]
\end{align*}

The first–order entropy change is:
\begin{align*}
\Delta \Entropy_{\policy}^{\mathrm{REPO}}(\cdot\mid\state)
&\approx \lr\cdot g_{\mathrm{REPO}}(\state)^\top h(\state) \\
&= \lr\cdot \left(\E_{\action \sim \policy(\cdot\mid\state)} \left[(\adv(\state,\action)-\beta_{\state}\clogp(\state,\action))\cdot\score(\state,\action)\right]\right)^\top h(\state) \\
&= \lr\cdot \left(\E_{\action \sim \policy(\cdot\mid\state)} \left[\adv(\state,\action)\cdot\score(\state,\action)\right]\right)^\top h(\state) -\beta_{\state}\cdot\lr\cdot\left(\E_{\action \sim \policy(\cdot\mid\state)}\left[\clogp(\state,\action))\cdot\score(\state,\action)\right]\right)^\top h(\state) \\
&= \lr\cdot g(\state)^\top h(\state) +\beta_{\state}\cdot\lr\cdot h(\state)^\top h(\state) \\
&= \Delta \Entropy_{\policy}(\cdot\mid\state) + \beta_{\state}\cdot \lr\cdot \left\|\E_{\action \sim \policy(\cdot\mid\state)}\left[\clogp(\state,\action)\cdot\score(\state,\action)\right]\right\|^2.
\end{align*}
\end{proof}

\subsection{Connection to entropy regularization}
\label{sec:repo-entropy-connection}

The REPO objective has a direct connection to standard entropy regularization.
Consider the REPO-D advantage for a given state $\state$:
\begin{align*}
\adv_{\mathrm{REPO}}(\state,\action) = \adv(\state,\action) - \beta_{\state}\cdot\clogp(\state,\action).
\end{align*}
The resulting policy gradient is:
\begin{align*}
\grad_\params \obj_{\mathrm{REPO}}(\state) 
&= \E_{\action\sim\policy(\cdot\mid\state)}\left[\adv_{\mathrm{REPO}}(\state,\action)\cdot\score(\state,\action)\right] \\
&= \underbrace{\E_{\action\sim\policy(\cdot\mid\state)}\left[\adv(\state,\action)\cdot\score(\state,\action)\right]}_{\grad_\params\obj_{\mathrm{MDP}}(\state)} + \beta_{\state}\cdot\underbrace{\E_{\action\sim\policy(\cdot\mid\state)}\left[-\clogp(\state,\action)\cdot\score(\state,\action)\right]}_{\grad_\params\Entropy_\policy(\cdot\mid\state)},
\end{align*}
where the second equality follows from \cref{lemma:entropy-gradient}.
Thus, REPO-D is \emph{exactly equivalent} to augmenting the MDP objective with an explicit entropy bonus:
\begin{align*}
\obj_{\mathrm{REPO}}(\state) = \obj_{\mathrm{MDP}}(\state) + \beta_{\state}\cdot\Entropy_{\policy}(\cdot\mid\state).
\end{align*}

\paragraph{Advantage over an explicit entropy bonus.}
Despite this equivalence, REPO-D offers two practical advantages over directly computing and differentiating the entropy bonus.

\textbf{Zero additional memory cost.}
Computing an exact entropy bonus requires materializing the full logit vector over the vocabulary for every token in every trajectory, as the entropy $\Entropy_\policy(\cdot\mid\state) = -\sum_\action \policy(\action\mid\state)\log\policy(\action\mid\state)$ is a sum over all vocabulary entries.
For large vocabularies (e.g., $>10^5$ tokens) and long trajectories, this is unnecessarily memory-intensive.
In contrast, REPO-D estimates the entropy gradient via REINFORCE using only the log-probability of the \emph{sampled} token, which is already computed during the forward pass when using Cut Cross-Entropy~\citep{wijmans2024cut}.
REPO-D therefore incurs \textbf{zero additional memory cost} beyond the standard policy gradient computation.

\textbf{Variance reduction via control variates.}
Because REPO-D couples the advantage $\adv(\state,\action)$ and the centered log-probability $\clogp(\state,\action)$ at the level of individual sampled actions, it naturally acts as a \emph{control variate} for the policy gradient estimator.
When advantages and log-probabilities are positively correlated---as is typical during the entropy-collapsing phase of training, per \cref{thm:mdp-entropy-approx}---subtracting the $\beta_{\state}\cdot\clogp$ term reduces variance in the combined gradient estimate.
This coupling would be lost if the policy gradient and entropy bonus were estimated from independent samples.

\subsection{Proof of Theorem~\ref{thm:bf16-bias} (BF16 Multiplicative Bias)}
\label{proof:bf16-bias}

\BFSixteenBias*

\begin{proof}
Let $\varepsilon_{\text{new}}$ and $\varepsilon_{\text{old}}$ denote the quantization errors:
\begin{align}
\log \pi_\theta(a|s) &= \bfcast(\log \pi_\theta(a|s)) + \varepsilon_{\text{new}} \\
\log \pi_{\theta_{\text{old}}}(a|s) &= \bfcast(\log \pi_{\theta_{\text{old}}}(a|s)) + \varepsilon_{\text{old}}
\end{align}

Assuming the lower 16 mantissa bits are uniformly distributed, $\varepsilon_{\text{new}}, \varepsilon_{\text{old}} \sim \text{Uniform}(-\text{ulp}/2, \text{ulp}/2)$ independently, where $\text{ulp}$ is the unit-in-last-place at the relevant exponent.

The true ratio can be expressed as:
\begin{align}
r_{\text{true}} &= \exp(\bfcast(\log \pi_\theta) - \bfcast(\log \pi_{\theta_{\text{old}}}) + \varepsilon_{\text{new}} - \varepsilon_{\text{old}}) \\
&= r_{\text{observed}} \cdot \exp(\delta)
\end{align}
where $\delta = \varepsilon_{\text{new}} - \varepsilon_{\text{old}}$.

Therefore:
$r_{\text{observed}} = r_{\text{true}} \cdot \exp(-\delta)$

Since $\delta$ is symmetric around zero with $\text{Var}(\delta) = \text{Var}(\varepsilon_{\text{new}}) + \text{Var}(\varepsilon_{\text{old}}) = (\text{ulp}_{\text{new}}^2 + \text{ulp}_{\text{old}}^2)/12$, Jensen's inequality for the convex function $\exp(-\cdot)$ yields:
\begin{align}
\mathbb{E}[r_{\text{observed}} \mid r_{\text{true}}] &= r_{\text{true}} \cdot \mathbb{E}[\exp(-\delta)] \\
&> r_{\text{true}} \cdot \exp(\mathbb{E}[-\delta]) \\
&= r_{\text{true}}
\end{align}

For small quantization errors, a Taylor expansion gives:
\begin{align}
\mathbb{E}[r_{\text{observed}} \mid r_{\text{true}}] &\approx r_{\text{true}} \cdot (1 + \text{Var}(\delta)/2) \\
&= r_{\text{true}} \cdot (1 + (\text{ulp}_{\text{new}}^2 + \text{ulp}_{\text{old}}^2)/24)
\end{align}

The bias is \textbf{proportional to $r_{\text{true}}$}: larger ratios experience proportionally larger absolute bias.
\end{proof}

\newpage

\section{Numerical Considerations: Additional Details}
\label{sec:numerical_considerations}

The main findings on numerical considerations (FSDP2 output casting, FP16 vs BF16 training, and their impact on entropy dynamics) are presented in \cref{sec:empirical-findings}. This appendix provides additional technical details.

\subsection{Additional Details on FSDP2 Output Casting}
\label{sec:bf16_output_casting}

As described in~\cref{app:details-training}, we use the FSDP2 framework for distributed training on multiple GPUs~\citep{zhang2024simplefsdp}. In the HuggingFace \textit{Accelerate} library, FSDP2 is configured to cast all module outputs to the chosen floating-point type (e.g., BF16), including the final model outputs, even when the computations involving logits (such as softmax) are performed in full 32-bit precision.

This is the default behavior of the library, and at the time of submission there was no single configuration parameter to switch it off. To preserve full-precision log probabilities, the user must explicitly override the \texttt{output\_dtype} of the \texttt{MixedPrecisionPolicy} (MPP) object (see \texttt{torch/distributed/fsdp/\_fully\_shard/\_fsdp\_api.py} for details).

Naively, this cast should not affect the RL gradients, as the backward pass of such a casting operation is the identity function. Indeed, there appears to be no measurable difference for fully on-policy algorithms like RLOO. The half-precision downcast, however, does measurably impact the numerical stability of the importance weight and thus can affect off-policy algorithms that use clipping, such as LOOP, GRPO, and DAPO.

\cref{fig:bf16-clip-ratio} empirically demonstrates the clipping bias introduced by the 16-bit rounding when training with DAPO. We observe that when the rounding is present (before the \texttt{MixedPrecisionPolicy} fix), more tokens get clipped due to exceeding the higher end of the range $\eps_{\text{high}}$ preventing probability increase for low probability tokens and thus reducing overall entropy. At the same time, fewer tokens are clipped due to $\eps_{\text{low}}$. The overall effect is the tightening of the clipping on the higher end of the range while relaxing it on the lower end, resulting in the reduced effectiveness of entropy preservation from the asymmetric clipping.
It can be further noted that the 16-bit rounding changes the clipping outcome only for a tiny fraction of tokens, fewer than 0.1\% of the total number of output tokens. This suggests that a very small number of pivotal tokens play an essential role in learning and warrants further study of this effect.

\Cref{sec:numerical_ablation} empirically confirms the significant impact of half-precision rounding on the overall performance and entropy dynamics (see~\cref{fig:appworld-curves-8b-bf16fix}).

\subsection{Float16 Training}
\label{sec:fp16_training}

In our original experiments, the models were trained exclusively in \textit{bfloat16} (BF16), which has become common practice in LLM training because of its higher dynamic range. Recent publications~\citep{qi2025precisionrl} reported improved RL training with \textit{float16} (FP16) floating-point format as its additional 3 mantissa bits enable more accurate gradient representation.

In addition, the choice of floating-point format affects the discrepancy between inference (vLLM) and training policies. These discrepancies are inherent to RL systems with a separate inference server and arise from small differences in model-layer implementations as well as from the lack of batch-size invariance in GPU kernels. In our experiments, we find that FP16 training significantly reduces the inference-training discrepancy (see~\cref{fig:recomp-stats}).

\subsection{Ablation Study}
\label{sec:numerical_ablation}

\Cref{fig:appworld-curves-8b-bf16fix} summarizes the ablation study of the numerical tweaks described in~\cref{sec:bf16_output_casting,sec:fp16_training} performed for DAPO training on \ttt{Qwen-3-8B}. We observe that when the MPP fix and FP16 training are used together, the entropy dynamics of DAPO change completely, from collapse and sub-par exploration to a rapid increase in entropy over the course of training. More generally, we observed improved training across models and algorithm variants when both of the above changes were applied.

\begin{figure*}[h]
  \centering
  \includegraphics[width=\textwidth]{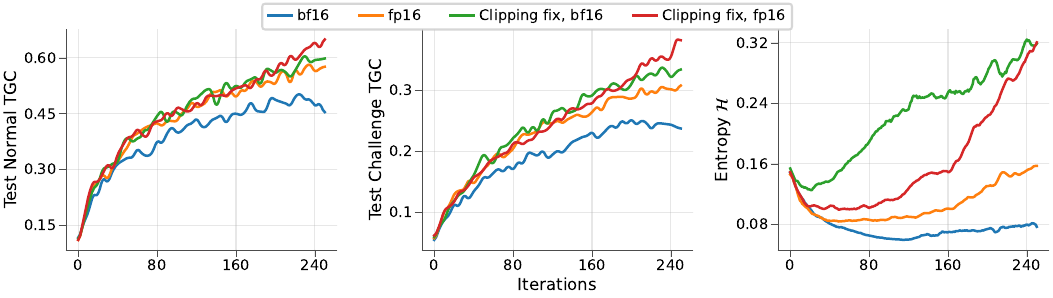}
  \caption{Cumulative effect of the MixedPrecisionPolicy (MPP) fix and FP16 training when applied to DAPO algorithm with \ttt{Qwen-3-8B}. Each curve represents the mean of three independent runs (seeds). This is the expanded version of \cref{fig:ablation}.}
  \label{fig:appworld-curves-8b-bf16fix}
\end{figure*}

\subsection{Softmax Gradient Numerical Precision}
\label{sec:softmax_precision}

An additional numerical issue arises in the backward pass of the softmax function when computing gradients for high-probability tokens. For a token with probability $p$, the gradient computation involves the term $-A \cdot (1-p)$ for the sampled token and $A \cdot p$ for all other tokens in the vocabulary. When $p$ exceeds approximately $1 - 2^{-23} \approx 0.9999999$ in single precision (or the corresponding threshold in half precision), the value $1-p$ rounds to exactly zero, causing the gradient to vanish entirely for high-confidence sampled tokens while the gradient for all other tokens is still non-zero. Essentially this makes the gradient vectors for high-confidence tokens incorrect. Surprisingly, in our experiments this affects on the order of 30-40\% of tokens; these high-confidence tokens often correspond to chat template formatting or programming language syntax.

This issue is distinct from the BF16 multiplicative bias discussed in \cref{sec:bf16_output_casting} and affects even full-precision training when the model becomes highly confident. The problem manifests as:
\begin{itemize}
    \item Gradients for high-probability tokens become incorrect
    \item Learning dynamics may be distorted for tokens where the model is already very confident
\end{itemize}

\paragraph{Fix.} The solution involves an $O(1)$ per time step recalculation of the problematic terms in 64-bit precision. Specifically, when computing the softmax gradient, we identify the highest probability tokens for each time step and simply recalculate their gradient in softmax backward step using 64-bit arithmetic. This targeted approach preserves numerical accuracy for high-confidence tokens without incurring the memory overhead of full 64-bit gradient computation for the entire vocabulary.

In practice, this effect is much less noticeable compared to the other numerical improvements (FP16 training and MixedPrecisionPolicy fix) mostly because the magnitude of affected gradients is tiny compared to tokens with $p \ll 1$. The combined effect of all numerical fixes is shown in \cref{fig:appworld-curves-8b-bf16fix}.

\newpage

\section{Additional experiment details}
\label{app:details-exp}

\subsection{Environments}
\label{app:details-tasks}

\textit{Interactive tool-use agent.} For the AppWorld benchmark, rollouts proceed in turns (up to 30 turns during training and 50 during evaluation), in a manner akin to an interactive notebook.

During each turn, the model generations are parsed to extract any Python code blocks, potentially containing calls to AppWorld API.
These are executed to retrieve information or alter the environment state.
The outputs of successful API calls or the error trace of incorrect calls appear in the agent's context after each turn.
Once done, the agent may mark a task as completed at which point it is assessed whether the task state was updated successfully.
Failure to mark the task as complete within the turn limit or context limit (32K) results in a failure.
Sparse outcome-based rewards in $[0,1]$ are assigned during training as the fraction of passing unit-tests.
Binary rewards in $\{0,1\}$ are used during evaluation requiring complete correctness.

\textit{Mathematical reasoning.}
For the AIME benchmarks, model responses are processed and scored using the Eleuther AI lm-eval-harness Minerva math parsing utilities \citep{eval-harness}.
The final unnormalized answer is first identified and parsed, then the answer is normalized to remove units, formatting, etc., and finally equivalence between the model answer and reference answer is determined using Sympy~\citep{meurer2017sympy}.

\subsection{Training}
\label{app:details-training}
Experiments are typically executed on 3 NVIDIA H100 8-GPU nodes.
One node is used for rollout generation, one for learning, and one for evaluation.
Rollouts are generated using two instances of vLLM~ \citep{kwon2023efficient} servers using 4 GPUs each with tensor parallelism. Custom RL implementation based on FSDP2~\citep{zhang2024simplefsdp} is used for training.
To account for any discrepancies between sampling and training subsystems, the log-probabilities of rollout tokens are recalculated on the training node to ensure accurate importance weights for backpropagation.
Cut-Cross-Entropy (CCE) is used to reduce the memory footprint during training by preventing the materialization of all logits except the target \citep{wijmans2024cut}.
Models are fine-tuned with LoRA ($\ttt{rank}=16$, $\alpha=32$) on the self-attention (key, value, query, output) and MLP modules \citep{hu2022lora}.
We use an \ttt{AdamW} optimizer with a constant learning rate of $5\times10^{-5}$, $\ttt{weight-decay}=0.01$, and gradient clipping with $\ttt{max-norm}=0.1$.
To speed up rollout collection, we introduce an early stopping criteria.
Once at least 4/6 rollouts per task and 90\% of total rollouts are collected, we immediately proceed to training to prevent bottlenecks caused by very few extra long generations~\citep{wijmans2019dd,chen2025reinforcement}.

\subsection{Summary of results}
\label{app:result-tables}

This section provides additional analysis of the performance of different algorithmic variants, see \Cref{tab:appworld_8b_fp16_camera,tab:appworld_32b_fp16,tab:aime_8b_camera,tab:aime_32b_camera}.

Each row in the table corresponds to three independent training runs. REPO-R is implemented on top of GRPO unless stated otherwise (since GRPO experiences significant exploration collapse without entropy regulation).

Entropy bonus baseline is implemented on top of GRPO with the adaptive coefficient described in \cref{sec:repo}. Due to higher memory usage, these experiments with \ttt{Qwen-3-32B} required B200 training nodes.

For AppWorld, in each run we find the checkpoint with the highest validation score (based on \textit{dev} dataset split) and report the mean scores on test splits across independent seeds, standard deviation, as well as the best score on a particular test split across all seeds.
For AIME we simply pick the checkpoint with the highest average test score between AIME '24 and '25 since our setup lacks a separate validation split.
The last column reports the change in entropy (absolute and relative) for the best checkpoint compared to the initial policy.

\Cref{tab:appworld_8b_fp16_camera,tab:appworld_32b_fp16} show strong performance of entropy-preserving methods compared to baselines (DAPO and GRPO respectively), as well as more stable entropy dynamics.

On AIME (\Cref{tab:aime_8b_camera,tab:aime_32b_camera}), ADAPO and REPO-R perform competitively with other off-policy methods, however the results are very close between all algorithms and the peak performance is unlocked very early in training (\cref{fig:aime-curves-8b-camera}). This suggests that the base models might already be overfit to this benchmark. To make the task harder and to make the training dynamics more interesting, we limit the maximum allowed context length to 4096 tokens, requiring the models to learn more efficient and compact reasoning.

With numerical improvements described in \cref{sec:fp16_training,sec:softmax_precision}, on-policy RLOO performs remarkably well in all our tests despite slower progress in early training iterations. We reach 79\% success rate on Test Normal and 71\% on Test Challenge, significantly exceeding the highest reported scores at the time of submission (\url{https://appworld.dev/leaderboard}) achieved with an agentic GPT-4.1-based system~\citep{marreed2025ibmcuga}.

\begin{table}[H]
\centering
\begin{tabular}{l|llll|l}
\toprule
Algorithm & Test Normal & Best TN & Test Challenge & Best TC & $\Delta\mathcal{H}$ \\
\midrule
RLOO & 0.53 {\scriptsize ± 0.05} & \textbf{0.64} & 0.32 {\scriptsize ± 0.05} & \textbf{0.40} & +0.01 (+4\%) \\
GRPO & 0.46 {\scriptsize ± 0.01} & 0.46 & 0.21 {\scriptsize ± 0.02} & 0.23 & -0.09 (-64\%) \\
LOOP & 0.40 {\scriptsize ± 0.02} & 0.42 & 0.19 {\scriptsize ± 0.02} & 0.22 & -0.08 (-53\%) \\
GSPO & \textbf{0.56 {\scriptsize ± 0.04}} & 0.60 & \textbf{0.32 {\scriptsize ± 0.03}} & 0.36 & +0.00 (+2\%) \\
DAPO & 0.51 {\scriptsize ± 0.03} & 0.55 & 0.28 {\scriptsize ± 0.01} & 0.29 & +0.06 (+40\%) \\
GRPO + $\mathcal{H}$ bonus & 0.48 {\scriptsize ± 0.03} & 0.52 & 0.26 {\scriptsize ± 0.02} & 0.28 & -0.01 (-6\%) \\
\midrule
ADAPO & 0.53 {\scriptsize ± 0.02} & 0.57 & 0.30 {\scriptsize ± 0.02} & 0.33 & -0.00 (-2\%) \\
REPO-R & 0.49 {\scriptsize ± 0.01} & 0.49 & 0.25 {\scriptsize ± 0.02} & 0.27 & -0.01 (-9\%) \\
\bottomrule
\end{tabular}
\caption{Task goal completion scores for AppWorld with \ttt{Qwen-3-8B} by training algorithm.}
\label{tab:appworld_8b_fp16_camera}
\end{table}

\begin{table}[H]
\centering
\begin{tabular}{l|llll|l}
\toprule
Algorithm & Test Normal & Best TN & Test Challenge & Best TC & $\Delta\mathcal{H}$ \\
\midrule
RLOO & 0.77 {\scriptsize ± 0.02} & 0.79 & \textbf{0.61 {\scriptsize ± 0.06}} & \textbf{0.71} & -0.18 (-36\%) \\
GRPO & 0.67 {\scriptsize ± 0.03} & 0.71 & 0.46 {\scriptsize ± 0.03} & 0.49 & -0.29 (-57\%) \\
LOOP & 0.66 {\scriptsize ± 0.02} & 0.68 & 0.45 {\scriptsize ± 0.03} & 0.47 & -0.31 (-60\%) \\
GSPO & 0.69 {\scriptsize ± 0.01} & 0.70 & 0.51 {\scriptsize ± 0.00} & 0.51 & -0.07 (-14\%) \\
DAPO & 0.73 {\scriptsize ± 0.04} & 0.77 & 0.52 {\scriptsize ± 0.02} & 0.55 & +1.52 (+298\%) \\
GRPO + $\mathcal{H}$ bonus & 0.71 {\scriptsize ± 0.02} & 0.74 & 0.49 {\scriptsize ± 0.02} & 0.51 & +0.14 (+27\%) \\
\midrule
ADAPO & \textbf{0.78 {\scriptsize ± 0.02}} & \textbf{0.82} & 0.58 {\scriptsize ± 0.03} & 0.62 & +0.52 (+102\%) \\
REPO-R & 0.73 {\scriptsize ± 0.04} & 0.78 & 0.54 {\scriptsize ± 0.06} & 0.63 & +0.03 (+7\%) \\
\bottomrule
\end{tabular}
\caption{Task goal completion scores for AppWorld with \ttt{Qwen-3-32B} by training algorithm.}
\label{tab:appworld_32b_fp16}
\end{table}

\begin{table}[H]
\centering
\begin{tabular}{l|llll|l}
\toprule
Algorithm & AIME 2024 & Best '24 & AIME 2025 & Best '25 & $\Delta\mathcal{H}$ \\
\midrule
RLOO & \textbf{0.47 {\scriptsize ± 0.01}} & \textbf{0.48} & 0.33 {\scriptsize ± 0.01} & 0.34 & -0.04 (-29\%) \\
GRPO & 0.45 {\scriptsize ± 0.02} & 0.47 & 0.32 {\scriptsize ± 0.02} & 0.35 & -0.07 (-47\%) \\
GSPO & 0.43 {\scriptsize ± 0.03} & 0.47 & 0.34 {\scriptsize ± 0.01} & 0.34 & +0.05 (+37\%) \\
DAPO & 0.43 {\scriptsize ± 0.01} & 0.44 & 0.32 {\scriptsize ± 0.02} & 0.33 & +0.23 (+158\%) \\
ADAPO & 0.43 {\scriptsize ± 0.00} & 0.43 & 0.34 {\scriptsize ± 0.02} & \textbf{0.36} & +0.13 (+93\%) \\
REPO-R & 0.45 {\scriptsize ± 0.01} & 0.46 & \textbf{0.34 {\scriptsize ± 0.01}} & 0.35 & +0.13 (+91\%) \\
\bottomrule
\end{tabular}
\caption{Scores for AIME with \ttt{Qwen-3-8B} by training algorithm. The model is limited to 4K context length.}
\label{tab:aime_8b_camera}
\end{table}

\begin{table}[H]
\centering
\begin{tabular}{l|llll|l}
\toprule
Algorithm & AIME 2024 & Best '24 & AIME 2025 & Best '25 & $\Delta\mathcal{H}$ \\
\midrule
RLOO & \textbf{0.53 {\scriptsize ± 0.01}} & \textbf{0.55} & \textbf{0.39 {\scriptsize ± 0.01}} & \textbf{0.40} & +0.04 (+28\%) \\
GRPO & 0.51 {\scriptsize ± 0.01} & 0.52 & 0.36 {\scriptsize ± 0.02} & 0.39 & +0.06 (+40\%) \\
DAPO & 0.43 {\scriptsize ± 0.01} & 0.44 & 0.31 {\scriptsize ± 0.03} & 0.35 & +0.75 (+519\%) \\
ADAPO & 0.46 {\scriptsize ± 0.03} & 0.50 & 0.34 {\scriptsize ± 0.01} & 0.35 & +0.29 (+198\%) \\
REPO-R & 0.49 {\scriptsize ± 0.02} & 0.51 & 0.35 {\scriptsize ± 0.02} & 0.38 & +0.17 (+117\%) \\
\bottomrule
\end{tabular}
\caption{Scores for AIME with \ttt{Qwen-3-32B} by training algorithm. The model is limited to 4K context length.}
\label{tab:aime_32b_camera}
\end{table}

\newpage

\section{Additional results}
\label{app:additional-results}

\subsection{Geometric interpretation of REPO}

\begin{figure*}[h]
    \centering
    \adjustbox{clip, trim=0 0 {.0\width} {.0\height}}{%
        \includegraphics[width=\linewidth]{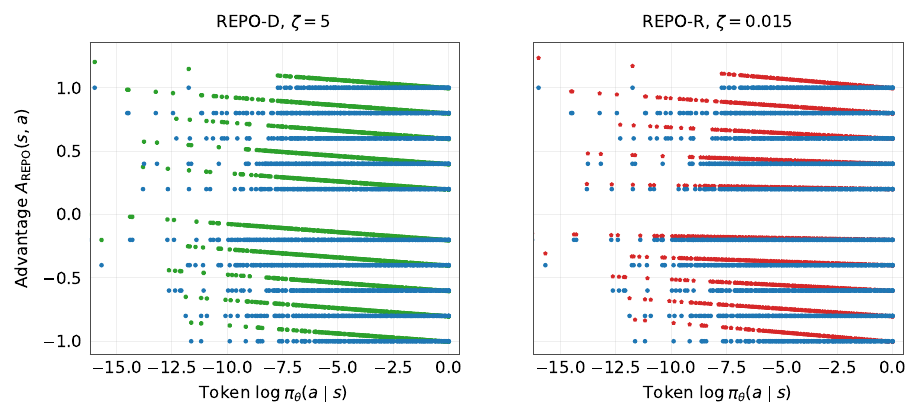}%
    }
    \caption{The REPO transformation rotates $(A, \log\pi)$ pairs, promoting low probability actions. Original unmodified advantages shown in blue, advantages $A_{\text{REPO-D}}$ are shown in green and $A_{\text{REPO-R}}$ in red. This plot demonstrates the transformation in a \ttt{Qwen-3-32B} AIME experiment, showcasing the real distribution of the data: $(A, \log\pi)$ pairs are highly concentrated near $\log\pi \approx 0.0$.}
    \label{fig:repo-rotation}
\end{figure*}

The transformation induced by REPO algorithm can be viewed in \cref{fig:repo-rotation}.
REPO-D reflects a consistent rotation across the space, boosting the advantages of actions proportional to their surprisals ($-\log\policy$).
REPO-R scales advantages proportionally not only to the surprisal, but to the magnitude of the advantage.
Intuitively, this strongly reinforces low-probability correct actions, especially when they yield outcomes significantly better than average for a given batch of experience.

\Cref{fig:repo-rotation} uses data from the \ttt{Qwen-3-32B} AIME experiment. The parameters of the algorithm are revealed in the structure of the data: there are 5 distinct positive and negative advantage values, corresponding to 5 unique outcomes of group-based advantage estimation (1 success / 5 failures, 2 successes / 4 failures, etc.). Groups with zero advantages are filtered out.

With the appropriate value of $\zeta$, REPO-D transformation counteracts the covariance-like term in $\Delta\Entropy$ approximation, therefore REPO-D is short for REPO-Decorrelate. REPO-R is a shorthand for REPO-Rescale, as it simply rescales the advantages by a multiplicative factor $1\pm\zeta\cdot\clogp(\state,\action)$.

\subsection{REPO-R: Derivation and implementation details}
\label{app:repo-r-derivation}

We derive the practical REPO-R formula from the general REPO advantage and discuss the simplifications made in practice.

\paragraph{Derivation.}
Starting from the general REPO advantage (\cref{sec:repo}):
$$\adv_{\mathrm{REPO}}(\state,\action) = \adv(\state,\action) - \beta_{\state}\cdot\clogp(\state,\action),$$
REPO-R sets $\beta_{\state,\action}^\text{REPO-R} = \zeta \cdot |\adv(\state,\action)|$, yielding:
\begin{align*}
\adv_{\mathrm{REPO\text{-}R}}(\state,\action) &= \adv(\state,\action) - \zeta \cdot |\adv(\state,\action)|\cdot\clogp(\state,\action).
\end{align*}
For positive advantages ($\adv > 0$), this simplifies to:
$$\adv_{\mathrm{REPO\text{-}R}}^{+}(\state,\action) = \adv(\state,\action)\cdot\left(1 - \zeta\cdot\clogp(\state,\action)\right).$$
Since $\clogp(\state,\action) = \log\policy(\action\mid\state) - \overline{\log\policy(\cdot\mid\state)}$, this boosts rare correct actions (which have more negative $\clogp$) and attenuates common ones when $\zeta$ is positive. Negative value of $\zeta$ reverses the effect punishing exceedingly rare tokens.

For negative advantages ($\adv < 0$), we have:
\[
\adv_{\mathrm{REPO\text{-}R}}^{-}(\state,\action)
=
\adv(\state,\action)\cdot\left(1 + \zeta\cdot\clogp(\state,\action)\right).
\]
Because $\clogp(\state,\action)$ is negative for rare actions and positive for common ones,
the factor $(1+\zeta\cdot\clogp)$ decreases penalties on rare incorrect actions and
strengthens penalties on common incorrect actions (for $\zeta>0$).

\paragraph{Practical simplification: using $\log\policy$ instead of centered $\clogp$.}
The theoretically motivated formulation uses the centered log-probability $\clogp(\state,\action) = \log\policy(\action\mid\state) - \E_{\action'\sim\policy(\cdot\mid\state)}[\log\policy(\action'\mid\state)]$.
In practice, we replace $\clogp(\state,\action)$ with the raw log-probability $\log\policy(\action\mid\state)$.

An implementation using $\clogp(\state,\action)$ would require recomputing $\overline{\log\policy(\cdot\mid\state)}$ over the entire dataset of trajectories after each gradient step, which is computationally expensive, and omitting it has negligible effect in practice.

\paragraph{Reference implementation.}
The following code listing shows the REPO-R advantage rescaling and the adaptive controller used in our experiments.

\medskip

\begin{lstlisting}
# Adaptive controller (once per iteration)
if current_entropy > target_entropy:
    if zeta >= 0:
        zeta /= 2.0
        if zeta < zeta_min:
            zeta = -zeta_min  # flip the sign if needed
    else:
        zeta = max(-zeta_max, zeta * 2)
elif current_entropy < target_entropy:
    if zeta >= 0:
        zeta = min(zeta_max, zeta * 2)
    else:
        zeta /= 2
        if zeta > -zeta_min:
            zeta = zeta_min  # flip the sign

\end{lstlisting}

\medskip

\begin{lstlisting}
# Bidirectional REPO-R (per token)
logp = new_logp.detach()  # latest model's logprobs + stop grad
if adv.item() > 0:
    adv = adv * (1 - zeta * logp)
    adv = adv.clamp_min(0.0)
elif adv.item() < 0:
    adv = adv * (1 + zeta * logp)
    adv = adv.clamp_max(0.0)
\end{lstlisting}

\newpage

\section{Qwen 2.5 Experiments}
\label{app:qwen25experiments}

In previous publications, methods like LOOP performed well with ``non-thinking'' models such as Qwen 2.5 32B~\citep{chen2025reinforcement}. In our Qwen3 experiments however, LOOP (and very similarly, GRPO) experienced early entropy collapse and underperformed compared to other RL methods.

We conducted additional experiments (see~\cref{fig:appworld-curves-32b-qwen25,tab:appworld-q25}) to determine whether this discrepancy arises from differences in model behavior or from implementation details. Key observations:

\begin{itemize}
    \item Qwen 2.5 32B exhibits a significantly higher \textit{initial} success rate (before the first training iteration) compared to Qwen 3 models. For example, on Test Normal the initial success rate is close to 40\% versus under 10\% for Qwen 3. We attribute this to Qwen3's tendency to produce excessively verbose thinking blocks which hinders the actual progress on the task.
    \item The best results on the hardest test split (Test Challenge) are substantially lower for Qwen 2.5 compared to Qwen 3, most likely reflecting the limitations of the respective base models.
    \item We were able to replicate and exceed results reported in previous work for Qwen 2.5 32B: the success rate of our best-performing LOOP checkpoints surpasses those in~\cite{chen2025reinforcement} by approximately 7\% on Test Normal and 9\% on Test Challenge. This improvement is most likely attributable to the numerical changes described in \cref{sec:numerical_considerations}, as our setup and hyperparameters for Qwen 2.5 closely match those in~\cite{chen2025reinforcement} in all other respects.
    \item Unlike in our Qwen 3 experiments, LOOP/GRPO do not experience rapid entropy collapse, whereas RLOO does, suggesting that base-model characteristics play a major role in entropy dynamics during training irrespective of the RL algorithm.
    
\end{itemize}

\begin{figure*}[h]
  \centering
  \includegraphics[width=\textwidth]{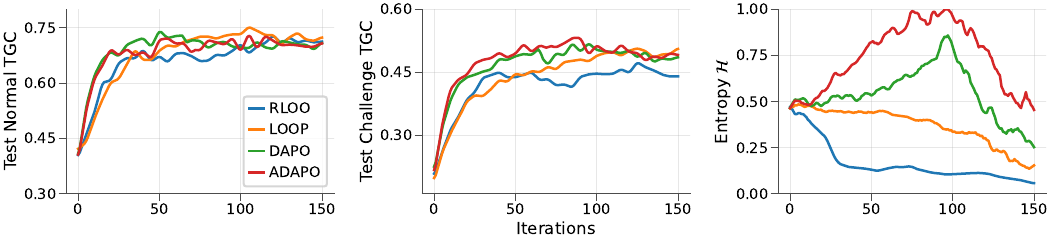}
  \caption{Qwen 2.5 32B test performance and token entropy on AppWorld vs. training iterations. Curves show mean across three independent seeds for each algorithm.}
  \label{fig:appworld-curves-32b-qwen25}
\end{figure*}

\begin{table}[h]
\centering
% \fontsize{8pt}{9pt}\selectfont
\begin{tabular}{l|ll|ll}
\toprule
Algorithm & Test Normal & Best TN & Test Challenge & Best TC \\
\midrule
RLOO & 0.72 & 0.78 & 0.47 & 0.50 \\
LOOP & 0.75 & 0.78 & 0.50 & 0.54 \\
DAPO & 0.74 & 0.77 & 0.50 & 0.56 \\
ADAPO & 0.73 & 0.78 & 0.51 & 0.59 \\
\bottomrule
\end{tabular}
\caption{Task-goal completion scores for AppWorld \texttt{Qwen-2.5-32B} by training algorithm. For each test split, we report the best average score across three seeds and the highest score among all seeds and training iterations.}
\label{tab:appworld-q25}
\end{table}

\end{document}

%% file: math_commands.tex
\usepackage{amsmath,amsfonts,bm}

\def\eqref#1{equation~\ref{#1}}
\def\1{\bm{1}}

\def\eps{{\epsilon}}

\DeclareMathAlphabet{\mathsfit}{\encodingdefault}{\sfdefault}{m}{sl}
\SetMathAlphabet{\mathsfit}{bold}{\encodingdefault}{\sfdefault}{bx}{n}

\newcommand{\E}{\mathbb{E}}

\newcommand{\lr}{\alpha}

\newcommand{\KL}{D_{\mathrm{KL}}}

%% file: iclr2026_conference_camera_arxiv.bbl
\begin{thebibliography}{}

\end{thebibliography}


\begin{thebibliography}{58}
\providecommand{\natexlab}[1]{#1}
\providecommand{\url}[1]{\texttt{#1}}
\expandafter\ifx\csname urlstyle\endcsname\relax
  \providecommand{\doi}[1]{doi: #1}\else
  \providecommand{\doi}{doi: \begingroup \urlstyle{rm}\Url}\fi

\bibitem[Agarwal et~al.(2024)Agarwal, Vieillard, Zhou, Stanczyk, Garea, Geist, and Bachem]{agarwal2024policy}
Rishabh Agarwal, Nino Vieillard, Yongchao Zhou, Piotr Stanczyk, Sabela~Ramos Garea, Matthieu Geist, and Olivier Bachem.
\newblock On-policy distillation of language models: Learning from self-generated mistakes.
\newblock In \emph{The Twelfth International Conference on Learning Representations}, 2024.
\newblock URL \url{https://openreview.net/forum?id=3zKtaqxLhW}.

\bibitem[Ahmadian et~al.(2024)Ahmadian, Cremer, Gall{\'e}, Fadaee, Kreutzer, Pietquin, {\"U}st{\"u}n, and Hooker]{ahmadian2024back}
Arash Ahmadian, Chris Cremer, Matthias Gall{\'e}, Marzieh Fadaee, Julia Kreutzer, Olivier Pietquin, Ahmet {\"U}st{\"u}n, and Sara Hooker.
\newblock Back to basics: Revisiting {REINFORCE}-style optimization for learning from human feedback in {LLM}s.
\newblock In \emph{Proceedings of the 62nd Annual Meeting of the Association for Computational Linguistics (Volume 1: Long Papers)}, pp.\  12248--12267, 2024.
\newblock URL \url{https://aclanthology.org/2024.acl-long.662}.

\bibitem[Chen et~al.(2025{\natexlab{a}})Chen, Cusumano-Towner, Huval, Petrenko, Hamburger, Koltun, and Kr{\"a}henb{\"u}hl]{chen2025reinforcement}
Kevin Chen, Marco Cusumano-Towner, Brody Huval, Aleksei Petrenko, Jackson Hamburger, Vladlen Koltun, and Philipp Kr{\"a}henb{\"u}hl.
\newblock Reinforcement learning for long-horizon interactive {LLM} agents.
\newblock \emph{arXiv preprint arXiv:2502.01600}, 2025{\natexlab{a}}.
\newblock URL \url{https://arxiv.org/abs/2502.01600}.

\bibitem[Chen et~al.(2025{\natexlab{b}})Chen, Qin, Wu, Ling, Ye, Zhao, and Shi]{chen2025pass}
Zhipeng Chen, Xiaobo Qin, Youbin Wu, Yue Ling, Qinghao Ye, Wayne~Xin Zhao, and Guang Shi.
\newblock Pass@k training for adaptively balancing exploration and exploitation of large reasoning models.
\newblock \emph{arXiv preprint arXiv:2508.10751}, 2025{\natexlab{b}}.
\newblock URL \url{https://arxiv.org/pdf/2508.10751}.

\bibitem[Comanici et~al.(2025)Comanici, Bieber, Schaekermann, Pasupat, Sachdeva, Dhillon, Blistein, Ram, Zhang, Rosen, et~al.]{comanici2025gemini}
Gheorghe Comanici, Eric Bieber, Mike Schaekermann, Ice Pasupat, Noveen Sachdeva, Inderjit Dhillon, Marcel Blistein, Ori Ram, Dan Zhang, Evan Rosen, et~al.
\newblock Gemini 2.5: Pushing the frontier with advanced reasoning, multimodality, long context, and next generation agentic capabilities.
\newblock \emph{arXiv preprint arXiv:2507.06261}, 2025.
\newblock URL \url{https://arxiv.org/abs/2507.06261}.

\bibitem[Cui et~al.(2025)Cui, Zhang, Chen, Yuan, Wang, Zuo, Li, Fan, Chen, Chen, et~al.]{cui2025entropy}
Ganqu Cui, Yuchen Zhang, Jiacheng Chen, Lifan Yuan, Zhi Wang, Yuxin Zuo, Haozhan Li, Yuchen Fan, Huayu Chen, Weize Chen, et~al.
\newblock The entropy mechanism of reinforcement learning for reasoning language models.
\newblock \emph{arXiv preprint arXiv:2505.22617}, 2025.
\newblock URL \url{https://arxiv.org/abs/2505.22617}.

\bibitem[Dang et~al.(2025)Dang, Baek, Kolter, and Raghunathan]{dang2025assessing}
Xingyu Dang, Christina Baek, J~Zico Kolter, and Aditi Raghunathan.
\newblock Assessing diversity collapse in reasoning.
\newblock In \emph{Scaling Self-Improving Foundation Models without Human Supervision}, 2025.
\newblock URL \url{https://openreview.net/forum?id=AMiKsHLjQh}.

\bibitem[Eysenbach \& Levine(2022)Eysenbach and Levine]{eysenbach2022maximum}
Benjamin Eysenbach and Sergey Levine.
\newblock Maximum entropy {RL} (provably) solves some robust {RL} problems.
\newblock In \emph{International Conference on Learning Representations}, 2022.
\newblock URL \url{https://openreview.net/forum?id=PtSAD3caaA2}.

\bibitem[Gandhi et~al.(2025)Gandhi, Chakravarthy, Singh, Lile, and Goodman]{gandhi2025cognitive}
Kanishk Gandhi, Ayush Chakravarthy, Anikait Singh, Nathan Lile, and Noah~D Goodman.
\newblock Cognitive behaviors that enable self-improving reasoners, or, four habits of highly effective stars.
\newblock \emph{arXiv preprint arXiv:2503.01307}, 2025.
\newblock URL \url{https://arxiv.org/abs/2503.01307}.

\bibitem[Gao et~al.(2024)Gao, Tow, Abbasi, Biderman, Black, DiPofi, Foster, Golding, Hsu, Le~Noac'h, Li, McDonell, Muennighoff, Ociepa, Phang, Reynolds, Schoelkopf, Skowron, Sutawika, Tang, Thite, Wang, Wang, and Zou]{eval-harness}
Leo Gao, Jonathan Tow, Baber Abbasi, Stella Biderman, Sid Black, Anthony DiPofi, Charles Foster, Laurence Golding, Jeffrey Hsu, Alain Le~Noac'h, Haonan Li, Kyle McDonell, Niklas Muennighoff, Chris Ociepa, Jason Phang, Laria Reynolds, Hailey Schoelkopf, Aviya Skowron, Lintang Sutawika, Eric Tang, Anish Thite, Ben Wang, Kevin Wang, and Andy Zou.
\newblock The language model evaluation harness, 07 2024.
\newblock URL \url{https://zenodo.org/records/12608602}.

\bibitem[Guo et~al.(2025)Guo, Yang, Zhang, Song, Zhang, Xu, Zhu, Ma, Wang, Bi, et~al.]{guo2025deepseek}
Daya Guo, Dejian Yang, Haowei Zhang, Junxiao Song, Ruoyu Zhang, Runxin Xu, Qihao Zhu, Shirong Ma, Peiyi Wang, Xiao Bi, et~al.
\newblock Deep{S}eek-{R}1: Incentivizing reasoning capability in {LLM}s via reinforcement learning.
\newblock \emph{arXiv preprint arXiv:2501.12948}, 2025.
\newblock URL \url{https://arxiv.org/abs/2501.12948}.

\bibitem[Haarnoja et~al.(2017)Haarnoja, Tang, Abbeel, and Levine]{haarnoja2017reinforcement}
Tuomas Haarnoja, Haoran Tang, Pieter Abbeel, and Sergey Levine.
\newblock Reinforcement learning with deep energy-based policies.
\newblock In \emph{International conference on machine learning}, pp.\  1352--1361. PMLR, 2017.
\newblock URL \url{https://proceedings.mlr.press/v70/haarnoja17a.html}.

\bibitem[Haarnoja et~al.(2018)Haarnoja, Zhou, Abbeel, and Levine]{haarnoja2018soft}
Tuomas Haarnoja, Aurick Zhou, Pieter Abbeel, and Sergey Levine.
\newblock Soft actor-critic: Off-policy maximum entropy deep reinforcement learning with a stochastic actor.
\newblock In \emph{International conference on machine learning}, pp.\  1861--1870. PMLR, 2018.
\newblock URL \url{https://proceedings.mlr.press/v80/haarnoja18b}.

\bibitem[He et~al.(2025)He, Fried, and Welleck]{he2025rewarding}
Andre He, Daniel Fried, and Sean Welleck.
\newblock Rewarding the unlikely: Lifting {GRPO} beyond distribution sharpening.
\newblock \emph{arXiv preprint arXiv:2506.02355}, 2025.
\newblock URL \url{https://arxiv.org/abs/2506.02355}.

\bibitem[Hu et~al.(2022)Hu, Wallis, Allen-Zhu, Li, Wang, Wang, Chen, et~al.]{hu2022lora}
Edward~J Hu, Phillip Wallis, Zeyuan Allen-Zhu, Yuanzhi Li, Shean Wang, Lu~Wang, Weizhu Chen, et~al.
\newblock {LoRA}: Low-rank adaptation of large language models.
\newblock In \emph{International Conference on Learning Representations}, 2022.
\newblock URL \url{https://openreview.net/forum?id=nZeVKeeFYf9}.

\bibitem[Jaech et~al.(2024)Jaech, Kalai, Lerer, Richardson, El-Kishky, Low, Helyar, Madry, Beutel, Carney, et~al.]{jaech2024openai}
Aaron Jaech, Adam Kalai, Adam Lerer, Adam Richardson, Ahmed El-Kishky, Aiden Low, Alec Helyar, Aleksander Madry, Alex Beutel, Alex Carney, et~al.
\newblock Open{AI} o1 system card.
\newblock \emph{arXiv preprint arXiv:2412.16720}, 2024.
\newblock URL \url{https://arxiv.org/abs/2412.16720}.

\bibitem[Kadavath et~al.(2022)Kadavath, Conerly, Askell, Henighan, Drain, Perez, Schiefer, Hatfield-Dodds, DasSarma, Tran-Johnson, et~al.]{kadavath2022language}
Saurav Kadavath, Tom Conerly, Amanda Askell, Tom Henighan, Dawn Drain, Ethan Perez, Nicholas Schiefer, Zac Hatfield-Dodds, Nova DasSarma, Eli Tran-Johnson, et~al.
\newblock Language models (mostly) know what they know.
\newblock \emph{arXiv preprint arXiv:2207.05221}, 2022.
\newblock URL \url{https://arxiv.org/abs/2207.05221}.

\bibitem[Kazemnejad et~al.(2024)Kazemnejad, Aghajohari, Portelance, Sordoni, Reddy, Courville, and Le~Roux]{kazemnejad2024vineppo}
Amirhossein Kazemnejad, Milad Aghajohari, Eva Portelance, Alessandro Sordoni, Siva Reddy, Aaron Courville, and Nicolas Le~Roux.
\newblock Vine{PPO}: Unlocking {RL} potential for {LLM} reasoning through refined credit assignment.
\newblock In \emph{International Conference on Learning Representations}, 2024.
\newblock URL \url{https://openreview.net/forum?id=5mJrGtXVwz}.

\bibitem[Kool et~al.(2019)Kool, van Hoof, and Welling]{kool2019buy}
Wouter Kool, Herke van Hoof, and Max Welling.
\newblock Buy 4 reinforce samples, get a baseline for free!
\newblock \emph{Deep RL Meets Structured Prediction Workshop at ICLR}, 2019.
\newblock URL \url{https://openreview.net/forum?id=r1lgTGL5DE}.

\bibitem[Korbak et~al.(2022)Korbak, Perez, and Buckley]{korbak2022rl}
Tomasz Korbak, Ethan Perez, and Christopher Buckley.
\newblock {RL} with {KL} penalties is better viewed as {Bayesian} inference.
\newblock In \emph{Findings of the Association for Computational Linguistics: EMNLP 2022}, pp.\  1083--1091, 2022.
\newblock URL \url{https://aclanthology.org/2022.findings-emnlp.77}.

\bibitem[Kwon et~al.(2023)Kwon, Li, Zhuang, Sheng, Zheng, Yu, Gonzalez, Zhang, and Stoica]{kwon2023efficient}
Woosuk Kwon, Zhuohan Li, Siyuan Zhuang, Ying Sheng, Lianmin Zheng, Cody~Hao Yu, Joseph Gonzalez, Hao Zhang, and Ion Stoica.
\newblock Efficient memory management for large language model serving with {P}aged{A}ttention.
\newblock In \emph{Proceedings of the 29th symposium on operating systems principles}, pp.\  611--626, 2023.
\newblock URL \url{https://dl.acm.org/doi/abs/10.1145/3600006.3613165}.

\bibitem[Lambert et~al.(2024)Lambert, Morrison, Pyatkin, Huang, Ivison, Brahman, Miranda, Liu, Dziri, Lyu, et~al.]{lambert2024tulu}
Nathan Lambert, Jacob Morrison, Valentina Pyatkin, Shengyi Huang, Hamish Ivison, Faeze Brahman, Lester James~V Miranda, Alisa Liu, Nouha Dziri, Shane Lyu, et~al.
\newblock Tulu 3: Pushing frontiers in open language model post-training.
\newblock \emph{arXiv preprint arXiv:2411.15124}, 2024.
\newblock URL \url{https://arxiv.org/abs/2411.15124}.

\bibitem[Li et~al.(2024)Li, Beeching, Tunstall, Lipkin, Soletskyi, Huang, Rasul, Yu, Jiang, Shen, Qin, Dong, Zhou, Fleureau, Lample, and Polu]{numina_math_datasets}
Jia Li, Edward Beeching, Lewis Tunstall, Ben Lipkin, Roman Soletskyi, Shengyi~Costa Huang, Kashif Rasul, Longhui Yu, Albert Jiang, Ziju Shen, Zihan Qin, Bin Dong, Li~Zhou, Yann Fleureau, Guillaume Lample, and Stanislas Polu.
\newblock {NuminaMath-1.5}, 2024.
\newblock URL \url{https://huggingface.co/datasets/AI-MO/NuminaMath-1.5}.

\bibitem[Liu et~al.(2025{\natexlab{a}})Liu, Diao, Lu, Hu, Dong, Choi, Kautz, and Dong]{liu2025prorl}
Mingjie Liu, Shizhe Diao, Ximing Lu, Jian Hu, Xin Dong, Yejin Choi, Jan Kautz, and Yi~Dong.
\newblock Pro{RL}: Prolonged reinforcement learning expands reasoning boundaries in large language models.
\newblock \emph{arXiv preprint arXiv:2505.24864}, 2025{\natexlab{a}}.
\newblock URL \url{https://arxiv.org/abs/2505.24864}.

\bibitem[Liu et~al.(2025{\natexlab{b}})Liu, Chen, Li, Qi, Pang, Du, Lee, and Lin]{liu2025understanding}
Zichen Liu, Changyu Chen, Wenjun Li, Penghui Qi, Tianyu Pang, Chao Du, Wee~Sun Lee, and Min Lin.
\newblock Understanding {R}1-{Z}ero-like training: A critical perspective.
\newblock \emph{arXiv preprint arXiv:2503.20783}, 2025{\natexlab{b}}.
\newblock URL \url{https://arxiv.org/abs/2503.20783}.

\bibitem[Marreed et~al.(2025)Marreed, Oved, Yaeli, Shlomov, Levy, Sela, Adi, and Mashkif]{marreed2025ibmcuga}
Sami Marreed, Alon Oved, Avi Yaeli, Segev Shlomov, Ido Levy, Aviad Sela, Asaf Adi, and Nir Mashkif.
\newblock Towards enterprise-ready computer using generalist agent.
\newblock \emph{CoRR}, 2025.

\bibitem[Meurer et~al.(2017)Meurer, Smith, Paprocki, {\v{C}}ert{\'\i}k, Kirpichev, Rocklin, Kumar, Ivanov, Moore, Singh, et~al.]{meurer2017sympy}
Aaron Meurer, Christopher~P Smith, Mateusz Paprocki, Ond{\v{r}}ej {\v{C}}ert{\'\i}k, Sergey~B Kirpichev, Matthew Rocklin, AMiT Kumar, Sergiu Ivanov, Jason~K Moore, Sartaj Singh, et~al.
\newblock Sym{P}y: symbolic computing in {P}ython.
\newblock \emph{PeerJ Computer Science}, 3:\penalty0 e103, 2017.
\newblock URL \url{https://peerj.com/articles/cs-103/}.

\bibitem[Ouyang et~al.(2022)Ouyang, Wu, Jiang, Almeida, Wainwright, Mishkin, Zhang, Agarwal, Slama, Ray, et~al.]{ouyang2022training}
Long Ouyang, Jeffrey Wu, Xu~Jiang, Diogo Almeida, Carroll Wainwright, Pamela Mishkin, Chong Zhang, Sandhini Agarwal, Katarina Slama, Alex Ray, et~al.
\newblock Training language models to follow instructions with human feedback.
\newblock \emph{Advances in neural information processing systems}, 35:\penalty0 27730--27744, 2022.
\newblock URL \url{https://proceedings.neurips.cc/paper_files/paper/2022/hash/b1efde53be364a73914f58805a001731-Abstract-Conference.html}.

\bibitem[Prasad et~al.(2024)Prasad, Yuan, Pang, Xu, Fazel-Zarandi, Bansal, Sukhbaatar, Weston, and Yu]{prasad2024self}
Archiki Prasad, Weizhe Yuan, Richard~Yuanzhe Pang, Jing Xu, Maryam Fazel-Zarandi, Mohit Bansal, Sainbayar Sukhbaatar, Jason~E Weston, and Jane Yu.
\newblock Self-consistency preference optimization.
\newblock In \emph{Forty-second International Conference on Machine Learning}, 2024.
\newblock URL \url{https://openreview.net/forum?id=94G4eL3RWi}.

\bibitem[Qi et~al.(2025)Qi, Liu, Zhou, Pang, Du, Lee, and Lin]{qi2025precisionrl}
Penghui Qi, Zichen Liu, Xiangxin Zhou, Tianyu Pang, Chao Du, Wee~Sun Lee, and Min Lin.
\newblock Defeating the training‐inference mismatch via fp16.
\newblock \emph{arXiv preprint arXiv:2510.26788}, 2025.

\bibitem[Schulman et~al.(2015)Schulman, Levine, Abbeel, Jordan, and Moritz]{schulman2015trust}
John Schulman, Sergey Levine, Pieter Abbeel, Michael Jordan, and Philipp Moritz.
\newblock Trust region policy optimization.
\newblock In \emph{International conference on machine learning}, pp.\  1889--1897. PMLR, 2015.
\newblock URL \url{https://proceedings.mlr.press/v37/schulman15.html}.

\bibitem[Schulman et~al.(2017)Schulman, Wolski, Dhariwal, Radford, and Klimov]{schulman2017proximal}
John Schulman, Filip Wolski, Prafulla Dhariwal, Alec Radford, and Oleg Klimov.
\newblock Proximal policy optimization algorithms.
\newblock \emph{arXiv preprint arXiv:1707.06347}, 2017.
\newblock URL \url{https://arxiv.org/abs/1707.06347}.

\bibitem[Shao et~al.(2025)Shao, Li, Xin, Geng, Wang, Oh, Du, Lambert, Min, Krishna, et~al.]{shao2025spurious}
Rulin Shao, Shuyue~Stella Li, Rui Xin, Scott Geng, Yiping Wang, Sewoong Oh, Simon~Shaolei Du, Nathan Lambert, Sewon Min, Ranjay Krishna, et~al.
\newblock Spurious rewards: Rethinking training signals in {RLVR}.
\newblock \emph{arXiv preprint arXiv:2506.10947}, 2025.
\newblock URL \url{https://arxiv.org/abs/2506.10947}.

\bibitem[Shao et~al.(2024)Shao, Wang, Zhu, Xu, Song, Bi, Zhang, Zhang, Li, Wu, et~al.]{shao2024deepseekmath}
Zhihong Shao, Peiyi Wang, Qihao Zhu, Runxin Xu, Junxiao Song, Xiao Bi, Haowei Zhang, Mingchuan Zhang, YK~Li, Yang Wu, et~al.
\newblock Deep{S}eek{M}ath: Pushing the limits of mathematical reasoning in open language models.
\newblock \emph{arXiv preprint arXiv:2402.03300}, 2024.
\newblock URL \url{https://arxiv.org/abs/2402.03300}.

\bibitem[Stiennon et~al.(2020)Stiennon, Ouyang, Wu, Ziegler, Lowe, Voss, Radford, Amodei, and Christiano]{stiennon2020learning}
Nisan Stiennon, Long Ouyang, Jeffrey Wu, Daniel Ziegler, Ryan Lowe, Chelsea Voss, Alec Radford, Dario Amodei, and Paul~F Christiano.
\newblock Learning to summarize with human feedback.
\newblock \emph{Advances in neural information processing systems}, 33:\penalty0 3008--3021, 2020.
\newblock URL \url{https://proceedings.neurips.cc/paper/2020/hash/1f89885d556929e98d3ef9b86448f951-Abstract.html}.

\bibitem[Sutton et~al.(1998)Sutton, Barto, et~al.]{sutton1998reinforcement}
Richard~S Sutton, Andrew~G Barto, et~al.
\newblock \emph{Reinforcement learning: An introduction}, volume~1.
\newblock MIT press Cambridge, 1998.

\bibitem[Team et~al.(2025)Team, Du, Gao, Xing, Jiang, Chen, Li, Xiao, Du, Liao, et~al.]{team2025kimi}
Kimi Team, Angang Du, Bofei Gao, Bowei Xing, Changjiu Jiang, Cheng Chen, Cheng Li, Chenjun Xiao, Chenzhuang Du, Chonghua Liao, et~al.
\newblock Kimi k1. 5: Scaling reinforcement learning with {LLM}s.
\newblock \emph{arXiv preprint arXiv:2501.12599}, 2025.
\newblock URL \url{https://arxiv.org/abs/2501.12599}.

\bibitem[Thrun(1992)]{thrun1992efficient}
Sebastian~B Thrun.
\newblock \emph{Efficient exploration in reinforcement learning}.
\newblock Carnegie Mellon University, 1992.

\bibitem[Trivedi et~al.(2024)Trivedi, Khot, Hartmann, Manku, Dong, Li, Gupta, Sabharwal, and Balasubramanian]{trivedi2024appworld}
Harsh Trivedi, Tushar Khot, Mareike Hartmann, Ruskin Manku, Vinty Dong, Edward Li, Shashank Gupta, Ashish Sabharwal, and Niranjan Balasubramanian.
\newblock App{W}orld: A controllable world of apps and people for benchmarking interactive coding agents.
\newblock In \emph{Proceedings of the 62nd Annual Meeting of the Association for Computational Linguistics (Volume 1: Long Papers)}, pp.\  16022--16076, 2024.
\newblock URL \url{https://aclanthology.org/2024.acl-long.850/}.

\bibitem[Vassoyan et~al.(2025)Vassoyan, Beau, and Plaud]{vassoyan2025ignore}
Jean Vassoyan, Nathana{\"e}l Beau, and Roman Plaud.
\newblock Ignore the {KL} penalty! boosting exploration on critical tokens to enhance {RL} fine-tuning.
\newblock In \emph{Findings of the Association for Computational Linguistics: NAACL 2025}, pp.\  6108--6118, 2025.
\newblock URL \url{https://aclanthology.org/2025.findings-naacl.340}.

\bibitem[Wang et~al.(2026)Wang, Xie, Zhang, Sun, Chen, Li, and Zhang]{wang2026entropy}
Shumin Wang, Yuexiang Xie, Wenhao Zhang, Yuchang Sun, Yanxi Chen, Yaliang Li, and Yanyong Zhang.
\newblock On the entropy dynamics in reinforcement fine-tuning of large language models.
\newblock \emph{arXiv preprint arXiv:2602.03392}, 2026.
\newblock URL \url{https://arxiv.org/abs/2602.03392}.

\bibitem[Wang et~al.(2025)Wang, Yang, Zeng, Ren, Liu, Peng, Cheng, He, Wang, Gao, et~al.]{wang2025reinforcement}
Yiping Wang, Qing Yang, Zhiyuan Zeng, Liliang Ren, Liyuan Liu, Baolin Peng, Hao Cheng, Xuehai He, Kuan Wang, Jianfeng Gao, et~al.
\newblock Reinforcement learning for reasoning in large language models with one training example.
\newblock \emph{arXiv preprint arXiv:2504.20571}, 2025.
\newblock URL \url{https://arxiv.org/abs/2504.20571}.

\bibitem[Wijmans et~al.(2020)Wijmans, Kadian, Morcos, Lee, Essa, Parikh, Savva, and Batra]{wijmans2019dd}
Erik Wijmans, Abhishek Kadian, Ari Morcos, Stefan Lee, Irfan Essa, Devi Parikh, Manolis Savva, and Dhruv Batra.
\newblock {DD-PPO}: Learning near-perfect pointgoal navigators from 2.5 billion frames.
\newblock In \emph{International Conference on Learning Representations}, 2020.
\newblock URL \url{https://openreview.net/forum?id=H1gX8C4YPr}.

\bibitem[Wijmans et~al.(2025)Wijmans, Huval, Hertzberg, Koltun, and Kraehenbuehl]{wijmans2024cut}
Erik Wijmans, Brody Huval, Alexander Hertzberg, Vladlen Koltun, and Philipp Kraehenbuehl.
\newblock Cut your losses in large-vocabulary language models.
\newblock In \emph{The Thirteenth International Conference on Learning Representations}, 2025.
\newblock URL \url{https://openreview.net/forum?id=E4Fk3YuG56}.

\bibitem[Williams(1992)]{williams1992simple}
Ronald~J Williams.
\newblock Simple statistical gradient-following algorithms for connectionist reinforcement learning.
\newblock \emph{Machine learning}, 8\penalty0 (3):\penalty0 229--256, 1992.
\newblock URL \url{https://link.springer.com/article/10.1007/bf00992696}.

\bibitem[Wu \& Choi(2025)Wu and Choi]{wu2025limits}
Fang Wu and Yejin Choi.
\newblock On the limits of {RLVR}: Support, entropy, and the illusion of reasoning.
\newblock In \emph{2nd AI for Math Workshop@ ICML 2025}, 2025.
\newblock URL \url{https://openreview.net/forum?id=KXtLWJAzgh}.

\bibitem[Xi et~al.(2025)Xi, Guo, Nan, Zhou, Shen, Chen, Liu, Huang, Zhang, Guo, et~al.]{xi2025bapo}
Zhiheng Xi, Xin Guo, Yang Nan, Enyu Zhou, Junrui Shen, Wenxiang Chen, Jiaqi Liu, Jixuan Huang, Zhihao Zhang, Honglin Guo, et~al.
\newblock {BAPO}: Stabilizing off-policy reinforcement learning for {LLMs} via balanced policy optimization with adaptive clipping.
\newblock \emph{arXiv preprint arXiv:2510.18927}, 2025.
\newblock URL \url{https://arxiv.org/abs/2510.18927}.

\bibitem[Yang et~al.(2025)Yang, Li, Yang, Zhang, Hui, Zheng, Yu, Gao, Huang, Lv, et~al.]{yang2025qwen3}
An~Yang, Anfeng Li, Baosong Yang, Beichen Zhang, Binyuan Hui, Bo~Zheng, Bowen Yu, Chang Gao, Chengen Huang, Chenxu Lv, et~al.
\newblock Qwen3 technical report.
\newblock \emph{arXiv preprint arXiv:2505.09388}, 2025.
\newblock URL \url{https://arxiv.org/abs/2505.09388}.

\bibitem[Yang et~al.(2024)Yang, Salamatian, Sun, Suresh, and Beirami]{yang2024asymptotics}
Joy~Qiping Yang, Salman Salamatian, Ziteng Sun, Ananda~Theertha Suresh, and Ahmad Beirami.
\newblock Asymptotics of language model alignment.
\newblock In \emph{2024 IEEE International Symposium on Information Theory (ISIT)}, pp.\  2027--2032. IEEE, 2024.
\newblock URL \url{https://ieeexplore.ieee.org/abstract/document/10619456}.

\bibitem[Yu et~al.(2025)Yu, Zhang, Zhu, Yuan, Zuo, Yue, Dai, Fan, Liu, Liu, et~al.]{yu2025dapo}
Qiying Yu, Zheng Zhang, Ruofei Zhu, Yufeng Yuan, Xiaochen Zuo, Yu~Yue, Weinan Dai, Tiantian Fan, Gaohong Liu, Lingjun Liu, et~al.
\newblock {DAPO}: An open-source {LLM} reinforcement learning system at scale.
\newblock \emph{arXiv preprint arXiv:2503.14476}, 2025.
\newblock URL \url{https://arxiv.org/abs/2503.14476}.

\bibitem[Yue et~al.(2025)Yue, Chen, Lu, Zhao, Wang, Song, and Huang]{yue2025does}
Yang Yue, Zhiqi Chen, Rui Lu, Andrew Zhao, Zhaokai Wang, Shiji Song, and Gao Huang.
\newblock Does reinforcement learning really incentivize reasoning capacity in {LLM}s beyond the base model?
\newblock \emph{arXiv preprint arXiv:2504.13837}, 2025.
\newblock URL \url{https://arxiv.org/abs/2504.13837}.

\bibitem[Zhang et~al.(2025)Zhang, Wu, Zhang, Zhao, and Bian]{zhang2025right}
Qingyang Zhang, Haitao Wu, Changqing Zhang, Peilin Zhao, and Yatao Bian.
\newblock Right question is already half the answer: Fully unsupervised {LLM} reasoning incentivization.
\newblock \emph{arXiv preprint arXiv:2504.05812}, 2025.
\newblock URL \url{https://arxiv.org/abs/2504.05812}.

\bibitem[Zhang et~al.(2024)Zhang, Liu, Feng, Gu, Purandare, Liang, and Massa]{zhang2024simplefsdp}
Ruisi Zhang, Tianyu Liu, Will Feng, Andrew Gu, Sanket Purandare, Wanchao Liang, and Francisco Massa.
\newblock Simple{FSDP}: Simpler fully sharded data parallel with torch. compile.
\newblock \emph{arXiv preprint arXiv:2411.00284}, 2024.
\newblock URL \url{https://arxiv.org/abs/2411.00284}.

\bibitem[Zhao et~al.(2025)Zhao, Meterez, Kakade, Pehlevan, Jelassi, and Malach]{zhao2025echo}
Rosie Zhao, Alexandru Meterez, Sham Kakade, Cengiz Pehlevan, Samy Jelassi, and Eran Malach.
\newblock Echo chamber: {RL} post-training amplifies behaviors learned in pretraining.
\newblock \emph{arXiv preprint arXiv:2504.07912}, 2025.
\newblock URL \url{https://arxiv.org/abs/2504.07912}.

\bibitem[Zheng et~al.(2025)Zheng, Liu, Li, Chen, Yu, Gao, Dang, Liu, Men, Yang, et~al.]{zheng2025group}
Chujie Zheng, Shixuan Liu, Mingze Li, Xiong-Hui Chen, Bowen Yu, Chang Gao, Kai Dang, Yuqiong Liu, Rui Men, An~Yang, et~al.
\newblock Group sequence policy optimization.
\newblock \emph{arXiv preprint arXiv:2507.18071}, 2025.
\newblock URL \url{https://arxiv.org/pdf/2507.18071}.

\bibitem[Ziebart et~al.(2008)Ziebart, Maas, Bagnell, Dey, et~al.]{ziebart2008maximum}
Brian~D Ziebart, Andrew~L Maas, J~Andrew Bagnell, Anind~K Dey, et~al.
\newblock Maximum entropy inverse reinforcement learning.
\newblock In \emph{AAAI}, volume~8, pp.\  1433--1438. Chicago, IL, USA, 2008.
\newblock URL \url{https://dl.acm.org/doi/10.5555/1620270.1620297}.

\bibitem[Ziegler et~al.(2019)Ziegler, Stiennon, Wu, Brown, Radford, Amodei, Christiano, and Irving]{ziegler2019fine}
Daniel~M Ziegler, Nisan Stiennon, Jeffrey Wu, Tom~B Brown, Alec Radford, Dario Amodei, Paul Christiano, and Geoffrey Irving.
\newblock Fine-tuning language models from human preferences.
\newblock \emph{arXiv preprint arXiv:1909.08593}, 2019.
\newblock URL \url{https://arxiv.org/abs/1909.08593}.

\bibitem[Zuo et~al.(2025)Zuo, Zhang, Sheng, Qu, Cui, Zhu, Li, Zhang, Long, Hua, et~al.]{zuo2025ttrl}
Yuxin Zuo, Kaiyan Zhang, Li~Sheng, Shang Qu, Ganqu Cui, Xuekai Zhu, Haozhan Li, Yuchen Zhang, Xinwei Long, Ermo Hua, et~al.
\newblock {TTRL}: Test-time reinforcement learning.
\newblock \emph{arXiv preprint arXiv:2504.16084}, 2025.
\newblock URL \url{https://arxiv.org/abs/2504.16084}.

\end{thebibliography}
